\newcommand{\myparagraph}{\textbf}
\title{Last iterate convergence of SGD for Least-Squares \\ in the Interpolation regime}
\author{%
   Aditya Varre \\
   EPFL \\
   \texttt{aditya.varre@epfl.ch} \\
   \And
   Loucas Pillaud-Vivien \\
   EPFL \\
   \texttt{loucas.pillaud-vivien@epfl.ch} \\
   \And
   Nicolas Flammarion \\
   EPFL \\
   \texttt{nicolas.flammarion@epfl.ch} \\
}
\begin{document}

\maketitle

\begin{abstract}%
Motivated by the recent successes of neural networks that have the ability to fit the data perfectly \emph{and} generalize well, we study the noiseless model in the fundamental least-squares setup. We assume that an optimum predictor perfectly fits the inputs and outputs $\langle \theta_* , \phi(X) \rangle = Y$, where $\phi(X)$ stands for a possibly infinite dimensional non-linear feature map. To solve this problem, we consider the estimator given by the last iterate of stochastic gradient descent (SGD) with constant step-size. In this context, our contribution is twofold: (i) \emph{from a (stochastic) optimization perspective}, we exhibit an archetypal problem where we can show explicitly the convergence of SGD final iterate for a non-strongly convex problem with constant step-size whereas usual results use some form of average and (ii) \emph{from a statistical perspective}, we give explicit non-asymptotic convergence rates in the over-parameterized setting and leverage a \emph{fine-grained} parameterization of the problem to exhibit polynomial rates that can be faster than $O(1/T)$. The link with reproducing kernel Hilbert spaces is established.
\end{abstract}




\section{Introduction}


As soon as large-scale statistics and optimization need to work together, stochastic gradient descent (SGD) is the core algorithm everybody tries to build upon~\citep{bottou2008tradeoffs}. Its versatility, practicability and adaptability make it the workhorse of almost every supervised machine learning problem. Yet, its outstanding efficiency remains mysterious, or at least surprising on certain aspects. Furthermore, the recent successes of deep neural networks (DNN) brought a new paradigm to the classical supervised learning setting with the ability to fit the data perfectly \emph{and} to generalize well~\citep{belkin2019reconciling}. Following this idea, the old statistical modelling where the model suffers from problem-dependent noise has to be revisited: there is a need of analyzing stochastic algorithms in this new light~\citep{ma2018power}. Whether we call it \emph{over-parameterization} to put emphasis on the large number of neurons needed in DNNs, \emph{interpolation} as in approximation theory or \emph{noiseless model} to stress the absence of noise in this statistical model, all these terminologies refer to the same idea. This regime brings with it new insights that reflect better the current machine learning setup.

Hence, the main question: how would SGD profit from this noiseless model? At first glance, the story seems clear: the old problem of variance at optimum making the SGD iterates oscillate asymptotically now disappears. Thus, should also disappear techniques that prevent from this, namely, averaging and decaying step sizes: one should be able to study the convergence of the SGD \emph{final iterate} with \emph{constant step-size}.

However, the study of the last iterate of SGD has always caused some technical problems preventing from a clear theory in this case~\citep{shamir2012open}. Indeed, the convergence of the final iterate is much more difficult to prove than the convergence of the average of the iterates~\citep{shamir2013stochastic,harvey2019tight,jain2019making}. 
This counter-intuitive difficulty can be explained by the fact that the interactions coming from the sampling noise of SGD prevent the final iterates from forming a decreasing sequence. Therefore standard Lyapounov strategies often failed in such a setting. Besides, even if averaged SGD has shown \emph{theoretically} some good convergence properties, the final iterate is commonly used \emph{in practice}. Finally, averaging techniques always suffer from saturation coming from the slow forgetting of initial conditions.

To tackle these questions, we consider the simplest setting of the linear regression over features $\phi(X)$ in a Hilbert space $\h$. In this context, the setup corresponds to the existence of linear relationship between the output and the input: $Y = \langle \theta_*, \phi(X) \rangle$. Note that this setting is rich as the features can be a non-linear transformation of the inputs, as it is commonly the case when they are defined through a positive-definite kernel $K(X,X')$ \citep{smola-book,Cristianini2004}. Note also that our analysis will, unless stated explicitly, be conducted in a non-parametric and dimensionless fashion, enabling the features to come from a infinite dimensional reproducing kernel Hilbert space. In this perspective, the problem we are considering is not strongly convex.

\myparagraph{Main contributions.} The aim of the present article is to answer at once the two following problems: (i) \emph{from a (stochastic) optimization perspective}, the goal is to exhibit an archetypal problem where we can show explicitly the convergence of SGD final iterate for a non-strongly convex problem with constant step-size and (ii) \emph{from a statistical/machine learning perspective}, the aim is to push deeper the study of the over-parameterized setting for the non-parametric least-squares problem. Contrary to the noisy setting, where (almost) everything is known, the noiseless model suffers from a certain lack of understanding. More precisely, our contributions are the following:
    \begin{itemize}
    	\item We show that the final iterate of constant step-size SGD achieves a convergence rate of $O(\ln(T)/T)$ under minimal assumptions. With a slightly stricter hypothesis we improve this convergence rate to $O(1/T)$.
    	\item Going further, we assume the usual non-parametric \emph{capacity and source conditions} respectively on the spectrum of the covariance and the optimum, and derive bounds for this fine-grained model. In this setup, we show the SGD \emph{fast rates} of~$O(1/T^{1+\alpha})$.
    	\item  We derive an explicit recursion on the eigenspaces of the covariance matrix that is of its own interest. Indeed, it is the cornerstone of our analysis and could be very useful in the future to understand important properties of SGD.
    \end{itemize}

From a technical standpoint, going from the average to the last iterate is far from anodyne for constant step-size SGD even in the interpolation regime. Indeed, in a similar setting, D.Aldous considered it an open problem~\citep[][Sec. $3.3$, Open problem (i)]{aldous}. Prior to our result, how to directly deal with the fluctuations induced by the stochasticity of the gradients was not known. Our work presents a direct Lyapunov technique that handles them without any explicit variance reduction methods like averaging or step-size decay.

\subsection{Related Work}

As recalled earlier, it is often easier to show convergence for the averaged iterates of SGD than for the final one. However, there has been a huge effort by the optimization and machine learning communities to work on the final iterate. We report here the different works and contexts for which such results are shown. All the results stated below are for convergence in function value.

\myparagraph{Last versus averaged iterates.} First, for \textit{non-smooth functions} (and variance of the SGD-gradients uniformly bounded), it is easy to show, with proper decrease of the step sizes, that the averaged iterates converge at rate $O(1/\sqrt{T})$ in the non-strongly convex case and $O(1/T)$ when the function is strongly convex. On the other side, convergence of the last iterate has been first addressed by \cite{zhang2004solving,shamir2013stochastic} who showed convergence rates of $O(\log(T)/\sqrt{T})$ for non-strongly convex objectives and $O(\log(T)/T)$ for strongly convex objectives. These rates have recently been shown to be tight by \cite{harvey2019tight} (the $\mathrm{log}$ term cannot be suppressed for the last iterate). Secondly, for \textit{smooth functions} (gradient Lipschitz and variance bounded only at optimum), \cite{bach2011non} showed in the non-strongly convex case that smoothness does not help; results on the averaged iterate are the same $O(1/\sqrt{T})$ and results on the last iterate are actually worse: $O(T^{-1/3})$. In the strongly convex setting, the optimal rate $O(1/T)$ rate is obtained both by the averaged and last iterates. However the averaged iterates are preferred since they lead to the optimal covariance and the step size is independent of the strong convexity constant~\citep{polyak1992acceleration}. 
Finally, the final iterate of SGD, despite its ubiquitous use in machine learning, never theoretically performs as well as the averaged, except when used with a geometrically-decaying learning-rate \citep{jain2019making,ge2019step}.


\myparagraph{Over-parameterized setting.} As recalled earlier, SGD in the over-parameterization regime corresponds to the assumption that all the function gradients vanish at optimum. It has been first studied by \cite{schmidt2013fast} who assumed furthermore a strong growth condition (\textit{SGC}) (first introduced by~\citep{solodov1998incremental,tseng1998incremental}) but often too stringent to match the machine learning practical set-up. Up to our knowledge, \cite{ma2018power} was the first to point out that this interpolation regime was particularly relevant in the recent deep learning framework and to show linear convergence in the strongly-convex setting. In the non-strongly convex case, $O(1/T)$-convergence has been shown by \cite{vaswani2019fast} for the averaged iterates.  Consequently, there has been many papers discussing this setting, proving convergence rates in different contexts:  Polyak-Lojasiewicz~\citep{bassily2018exponential}, accelerated~\citep{vaswani2019fast,liu2020accelerating}, second-order~\citep{meng2020fast}, line-search~\citep{vaswani2019painless}. However, no rate for the final iterate has been shown in the non-strongly convex setting for the last iterate as the only convergence rates achieved, $O(1/T)$, corresponds in all these works to the averaged iterates.

\myparagraph{The Quadratic Case.} More specifically, the quadratic case has been widely studied as a central model in the machine learning literature. As explained above, on the one hand, in the usual noisy setting, averaging or decaying step sizes have been considered, showing well-known $O(1/T)$ convergence rates~\citep{bach2013non}.  The final iterate convergence of SGD for the  noisy quadratic case is studied in ~\cite{ge2019step} where they show a $O(d \log{T}/T)$ rate and advocate the primacy of exponential step decay.
 On the other hand, the SGD in over-parameterized setting has been considered only lately by \cite{ma2018power,berthier2020tight,zou2021benign}. The first work corresponds to the strongly convex case, where the final iterate provably converges whereas our work is more aligned to \cite{berthier2020tight,zou2021benign} where the non-strongly convex case is considered.
 In \cite{berthier2020tight}, among other results, it is shown that only 
the minimum of the function value iterates converge at rate $O(1/T)$ and up to $O(1/T^{1+\alpha})$ for $\alpha >0$ when usual \emph{capacity and source} conditions are assumed for the problem~\citep{caponnetto2007optimal,steinwart2009optimal}. A part of our work can be seen as a continuation of \cite{berthier2020tight}, strengthening its result by proving that the final iterate converges instead of the $\mathrm{min}$ or some average. 
\section{Set-up: Stochastic Gradient Descent on the Least-squares Problem}

The setting is classical for stochastic gradient descent for linear least-squares in a Hilbert space $\h$. The function we would like to minimize over $\theta\in\h$ is
\begin{align}
\label{eq:least-squares}
\mathcal{R}(\theta) = \frac{1}{2}\E_\rho \left(\langle\theta,\phi(X)\rangle - Y\right)^2,
\end{align}
when $\rho$ is the joint law over input/outputs $(X,Y)\in \mathcal{X}\times\R$ and $\phi$ is a feature map from $\X$ to $\h$.  For the sake of simplicity, the reader can assume that $\h$ is a finite-dimensional Euclidean space. However, we take a special care that all the quantities and definitions could be valid whenever $\h$ is infinite dimensional. Hence, all the theorems are valid in a Reproducing Kernel Hilbert Spaces (RKHS) framework. Without any loss of generality, and to avoid heavy notations, we assume that~$\phi(X) = X$, and as soon as we refer the the finite dimensional case, we set $\h = \R^d$.

\myparagraph{Covariance matrix.} We define the covariance operator on $\h$: $H := \E_\rho[X \otimes X]$. This positive semi-definite operator is diagonalizable and plays a central role in our analysis. Hence, let us denote $\lambda_i$ its non-negative eigenvalues sorted in non-increasing order and $v_i$ its corresponding eigenvectors. In finite dimension, the covariance operator is simply a $d \times d$ matrix $H = \E_\rho[X X^\top]$ but, once again, notice that the analysis can also afford being infinite dimensional. In either case we denote $\lambda_{\mathrm{max}}$ the largest eigenvalue of $H$. 
In the finite dimensional case $\lambda_{\mathrm{min}}>0$ is always correctly defined but can be arbitrarily small.
On the opposite, a standard and important consequence of the infinite dimension is that, in most cases (especially when $\tr H < + \infty$), the infinite sequence of eigenvalues is no longer lower bounded as it converges to $0$: $H$ is no longer invertible and the problem is non-strongly convex.

\myparagraph{Noiseless model.} We assume that the model does not suffer from any noise: there exists $\theta_* \in \h$ such that, $\rho$-almost surely, $\langle\theta_*,x\rangle = y$. This means that the model is well specified \textit{and} that there is no noise at optimum. Due to this noiseless condition, we expect our iterates to go to the optimum without decaying step-sizes or averaging. We can also rewrite the Risk in Eq.~\eqref{eq:least-squares}:
\begin{align}
\label{eq:risk_quadratic_form}
\mathcal{R}(\theta) = \frac{1}{2} \left(\theta-\theta_*\right)^\top H\left(\theta-\theta_*\right) = \frac{1}{2}\tr\left[(\theta-\theta_*)\left(\theta-\theta_*\right)^\top H\right].
\end{align}

\myparagraph{SGD with constant step-size.} To minimize the function $\mathcal{R}$ defined in Eq.~\eqref{eq:least-squares}, we do not have access to the distribution $\rho$ but to a stream of i.i.d.~observations $(x_t, y_t)_{t\geqslant 1}$ sampled from it.
We hence perform a gradient descent in the direction given by one sample at a time with \emph{constant} stepsize $\gamma > 0$. Throughout all this paper, the initial condition is always set to $\theta_{0} = 0$ and for $t \geqslant 1$,
\begin{align}
\label{eq:SGD}
\theta_{t+1} = \theta_t - \gamma \left(\langle\theta_t,x_t\rangle - y_t\right) x_t.
\end{align}

\myparagraph{The impossibility of linear rates.} 
We stress that the least-squares setup we consider is a \emph{non-strongly} convex problem. Indeed, in finite dimension~$d$,~$\lambda_{\textrm{min}} > 0$, and SGD converges at linear rate~$\sim e^{-\gamma \lambda_{\textrm{min}}T}$ \citep[][Theorem 1]{ma2018power}, this asymptotic regime occurring after a time scale~$\tau\sim 1/(\gamma \lambda_{\textrm{min}})$. However, 
this apparent strong convexity
is a lure, since the large (or infinite) dimension makes this convergence rate vacuous for a arbitrarily small $\lambda_{\textrm{min}}$. Hence we focus on the non-parametric non-strongly convex setup where we prove non-asymptotic polynomial rates. Yet in finite dimension, it is always possible to see the linear regime after time $1/(\gamma \lambda_{\textrm{min}})$ as shown in Figure~\ref{fig:synthetic}, even if this time can be arbitrarily long.

\section{Convergence rates of the final iterate of SGD}

In large dimension, 
two quantities govern principally the rates of convergence of least-squares estimators: (i) the spectrum of the covariance matrix, and (ii) how the solution, $\theta_*$, projects on the eigenbasis of the covariance matrix. In this section, we state refined assumptions on the spectrum of the covariance matrix and the decomposition of $\theta_*$ over its eigenbasis. Note that in finite-dimension, all these quantities are always finite, but can be extremely large compared to the sample size $T$. This is why the reader can see these more as a \emph{fine-grained parameterization} of the problem rather than restrictive assumptions. The assumptions we make always go in pairs, (i) one for the features through the covariance matrix (Assumptions~\ref{asmp:forth_moment},~\ref{asmp:logH} and~\ref{asmp:alpha}) and (ii) one for the target solution (Assumptions~\ref{asmp:attainable_case},~\ref{asmp:logM} and~\ref{asmp:radius}).

\myparagraph{Summary on the results.} As assumptions go stricter, the convergence rates go faster. Every theorem is a bound on the \emph{expected risk} given by the \emph{last iterate} of the SGD recursion with \emph{constant} step size $\gamma$, started at $\theta_0 = 0$. Only for Theorem~\ref{thm:no-assumption}, for which the assumptions are the weakest, we allow the step-size to depend on the finite horizon $T$ (through its logarithm). All the theorems are stated with respect to finite constants defined thanks to the different assumptions. The reader can refer to Table~\ref{tab:results_summary} for a concise summary of them. Note also that we put a particular effort for the clarity of the bounds, and hence, some numerical constants might appear large. These are simple artifacts on the proofs and could be lowered, but at the price of less clear results. A more detailed summary of our results can be found in Appendix~\ref{App:summary}. All the proofs are deferred to Appendix~\ref{App:main_results}.

\begin{table}[ht]
        \centering
        \begin{tabular}{ || c c  l | c   ||}
        \textbf{Theorem} & \textbf{ Assumption} & \textbf{ Condition} & \hspace*{0.75cm} \textbf{Rate} \hspace*{0.75cm} \\ \hline \hline

          \multirow{2}{*}{  Theorem~\textbf{\ref{thm:no-assumption}}} & A.~\textbf{\ref{asmp:forth_moment}}&  $\Ex{ \nor{X}^2~XX^{\top}} \preccurlyeq R H$  &   \multirow{2}{*}{$\displaystyle \frac{\ln(T)}{T}$}    \\
        	& A.~\textbf{\ref{asmp:attainable_case}}& $\displaystyle \|\theta_*\|_\h < +\infty$ &\\

          &  \textbf{$\Uparrow$} & \hspace{1.4cm} \textbf{$\Uparrow$}  &     \\

          \multirow{2}{*}{ Theorem~\textbf{\ref{thm:nolog}}} & A.~\textbf{\ref{asmp:logH}} &  $\Ex { \scal{X}{ \ln{(H^{-1})} X} ~ XX^{\top} } \preccurlyeq \rln H$  &  \multirow{2}{*}{$\displaystyle \frac{1}{T}$}  \\ 
		&A.~\textbf{\ref{asmp:logM}} &$\tr(M_0 \ln(H^{-1})) < + \infty $ &\\
         
        &  \textbf{$\Uparrow$} &  \hspace{1.5cm}\textbf{$\Uparrow$} &     \\

          \multirow{2}{*}{ Theorem~\textbf{\ref{thm:capacity_source}}} & A.~\textbf{\ref{asmp:alpha}} & $ \Ex{ \nor{H^{-\alpha/2}X}^2~XX^{\top}} \preccurlyeq R_{\alpha} H, \ \alpha \in(0,1)$  &  \multirow{2}{*}{$\displaystyle \frac{1}{T^{1+\alpha\wedge\beta}}$ }   \\ 
      	& A.~\textbf{\ref{asmp:radius}}& $ \tr( M_0 H^{-\beta}) < + \infty, \ \beta \geqslant 0$ &\\
          \hline
        \end{tabular}
        \vspace{2pt}
        \caption{Table showing the main results of the article: different upper-bounds for the convergence of the SGD final iterate determined by different assumptions.}
        \label{tab:results_summary}
\end{table}

\subsection{Standard Least-Squares setting}

As often, when we analyze SGD, we make an $4$-th order assumption on the distribution of the features.

\begin{assshort}[Fourth Moment Condition]
\label{asmp:forth_moment}
There exists a finite constant $R \geq 0$, such that 
\begin{align}
\label{eq:forth_moment}
 \Ex{ \norm{X}^2~XX^{\top}} \preccurlyeq R H
\end{align}
\end{assshort}
The assumption holds in the case of bounded features, i.e. when $ \norm{X}^2 \leqslant R $, $\rho_X$-almost surely. It also holds more generally for features with infinite supports, such as sub-Gaussian data, and canonical basis distributions (i.e. $x = e_i$ with probability $p_i$).
This is a standard assumption  when analysing SGD for least squares~\cite{bach2013non,jain2018accelerating,ge2019step} which is weaker than what is assumed in~\cite{dieuleveut2017harder,flammarion2017stochastic,zou2021benign}.

\begin{assshort}[Attainable case]
\label{asmp:attainable_case}
The target solution $\theta_*$ lies in the space $\h$. This ensures that it has a finite norm~$\displaystyle \|\theta_*\|_\h < +\infty$. 
\end{assshort}
While this is always true in finite dimension, this assumption draws the attention on the fact that the norm of the optimum $\theta_*$ could be very large in high-dimension. As a limit, in infinite-dimensional spaces, $\theta_*$ could not belong to $\h$ and hence would have an infinite norm. This is why we refer to this assumption as the \emph{attainable case}. 
Under these two assumptions we have the following result.
\begin{theorem}
\label{thm:no-assumption}
Assume Assumptions \ref{asmp:forth_moment}, \ref{asmp:attainable_case}. Then, for $\ T \geqslant 2$, if we set $\gamma = (4 R \ln(T))^{-1}$, we have the following bound for the expected risk of the estimator given by the $T^{th}$ iterate of SGD:
\begin{align}
    \label{eq:conv_no}
    \E \mathcal{R}(\theta_T) \leqslant  3 R \|\theta_*\|^2_\h \frac{ \ln(T)}{T}.
\end{align}
\end{theorem}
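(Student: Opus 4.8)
The plan is to reduce the iteration to a closed recursion on the second moment of the error, unroll it in the eigenbasis of $H$, and then close a self-referential inequality that plays the role of the Lyapunov argument.

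\textbf{Reduction.} First I would set $\eta_t := \theta_t - \theta_*$; the noiseless assumption gives $\langle\theta_t,x_t\rangle - y_t = \langle\eta_t,x_t\rangle$, so \eqref{eq:SGD} becomes $\eta_{t+1} = (I - \gamma\, x_tx_t^\top)\eta_t$ with $\eta_0 = -\theta_*$. Writing $M_t := \E[\eta_t\eta_t^\top]$ and taking conditional then total expectations yields the closed recursion
\begin{align}
\label{eq:sketch-Mrec}
M_{t+1} = M_t - \gamma (H M_t + M_t H) + \gamma^2\, \mathcal{T}(M_t), \qquad \mathcal{T}(M) := \E\!\big[\langle X, M X\rangle\, XX^\top\big],
\end{align}
with $M_0 = \theta_*\theta_*^\top$ and $\E\,\mathcal{R}(\theta_t) = \tfrac12\tr(H M_t) =: \tfrac12 r_t$. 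From Assumption~\ref{asmp:forth_moment} I would extract two facts used throughout: $\mathcal{T}$ preserves positivity and satisfies $\tr\mathcal{T}(M) = \tr\!\big(M\,\E[\|X\|^2 XX^\top]\big)\leqslant R\,\tr(H M)$ for $M\succeq 0$; and testing \eqref{eq:forth_moment} against the top eigenvector of $H$ and using $\E[Z^2]\geqslant(\E Z)^2$ gives $\lambda_{\max}\leqslant R$, so that $1-2\gamma\lambda_i>0$ for all $i$ since $\gamma R = (4\ln T)^{-1}<\tfrac12$.

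\textbf{Unrolling in the eigenbasis.} In the eigenbasis $(v_i)$ of $H$, the diagonal of \eqref{eq:sketch-Mrec} is the exact scalar recursion $a_i(t+1) = (1-2\gamma\lambda_i)\,a_i(t) + \gamma^2\,[\mathcal{T}(M_t)]_{ii}$ for $a_i(t):=[M_t]_{ii}\geqslant 0$; crucially it couples to $M_t$ only through $\sum_i[\mathcal{T}(M_t)]_{ii}=\tr\mathcal{T}(M_t)$, which I can control without touching the off-diagonal entries. Solving and summing $r_t=\sum_i\lambda_i a_i(t)$ splits the risk into a deterministic bias part, bounded by $\big(\sup_{\lambda\geqslant 0}\lambda e^{-2\gamma\lambda t}\big)\sum_i a_i(0)\leqslant \tfrac{\|\theta_*\|^2_\h}{2e\gamma t}$, and a nonnegative variance part $\gamma^2\sum_{s<t}\sum_i\lambda_i(1-2\gamma\lambda_i)^{t-1-s}[\mathcal{T}(M_s)]_{ii}$, which the same scalar bound together with $\tr\mathcal{T}(M_s)\leqslant R\,r_s$ controls by $\lesssim \gamma R\sum_{s<t}\tfrac{r_s}{(t-s)\vee 1}$. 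Hence for every $1\leqslant t\leqslant T$,
\begin{align}
\label{eq:sketch-selfref}
t\, r_t \;\lesssim\; \frac{\|\theta_*\|^2_\h}{\gamma} \;+\; \gamma R\; t\sum_{s=0}^{t-1}\frac{r_s}{(t-s)\vee 1}.
\end{align}

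\textbf{Closing the recursion (the Lyapunov step).} This is the crux and the step I expect to be the main obstacle: because of the variance term $(r_t)$ need not be monotone, so one cannot telescope as for averaged iterates --- precisely the difficulty flagged as Aldous's open problem. The device I would use is the potential $Q := \max_{1\leqslant t\leqslant T} t\, r_t$. Using $r_0\leqslant\lambda_{\max}\|\theta_*\|^2_\h\leqslant R\|\theta_*\|^2_\h$, $r_s\leqslant Q/s$ for $s\geqslant 1$, and the identity $\sum_{s=1}^{t-1}\tfrac1{s(t-s)} = \tfrac{2}{t}\sum_{j=1}^{t-1}\tfrac1j\leqslant \tfrac{2(1+\ln T)}{t}$, the sum in \eqref{eq:sketch-selfref} is at most $\tfrac1t\big(R\|\theta_*\|^2_\h + 2(1+\ln T)\,Q\big)$; substituting and maximising over $t$ gives the self-bounding inequality $Q \lesssim \|\theta_*\|^2_\h/\gamma + \gamma R(1+\ln T)\,Q$. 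With the prescribed $\gamma=(4R\ln T)^{-1}$ the feedback coefficient is $\gamma R(1+\ln T)\leqslant \tfrac14(1+\tfrac1{\ln 2})<1$, so $Q$ can be solved for, $Q\lesssim \|\theta_*\|^2_\h/\gamma \asymp R\ln(T)\,\|\theta_*\|^2_\h$, whence $\E\,\mathcal{R}(\theta_T)=\tfrac12 r_T\leqslant Q/(2T)\lesssim R\,\|\theta_*\|^2_\h\,\tfrac{\ln T}{T}$. It is exactly the choice $\gamma\propto 1/\ln T$ that drives this feedback coefficient strictly below one --- at the unavoidable price of the extra $\ln T$ in the rate --- and the stated constant $3$ should follow, with room to spare, once the numerical factors in \eqref{eq:sketch-Mrec}--\eqref{eq:sketch-selfref} are carried through and the lower-order $\gamma^2R^2 r_{t-1}$ contribution and the $\sum 1/j\leqslant 1+\ln T$ slack are absorbed.
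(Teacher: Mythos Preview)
Your proposal is sound and coincides with the paper through the derivation of the self-referential inequality: your \eqref{eq:sketch-selfref} is exactly the paper's Lemma~\ref{lem:rec_f}, obtained by the same eigenbasis unrolling and the same use of Assumption~\ref{asmp:forth_moment} to bound $\tr\mathcal{T}(M_t)\le R\,\tr(HM_t)$. Where you part ways is in the closing Lyapunov step. You take the potential $Q=\max_{1\le t\le T}t\,r_t$, plug $r_s\le Q/s$ back into the convolution, and use the elementary identity $\sum_{s=1}^{t-1}\tfrac1{s(t-s)}=\tfrac2t H_{t-1}$ to get a scalar inequality $Q\le C+\alpha\,Q$. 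The paper instead works with the weighted-sum potential $S_T:=\sum_{t=0}^{T-1}\mathsf f_t/(T-t)$: it sums the basic inequality against the weights $1/(T-t)$, swaps the order of the resulting double sum, and bounds $\sum_{t=k+1}^{T-1}\tfrac1{(t-k)(T-t)}\le \tfrac{2\ln T}{T-k}$ to obtain $S_T\le C'+2\gamma R\ln(T)\,S_T$, which is then fed back into Lemma~\ref{lem:rec_f}. Both are discrete Gronwall closures and give the same rate; your max-based route is a touch more direct (no double-sum interchange), while the paper's sum-based potential is the one that extends verbatim to Theorems~\ref{thm:nolog}--\ref{thm:capacity_source} by replacing $1/(T-t)$ with $1/(T-t)^{1+\alpha}$.

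One arithmetic slip to flag: the feedback coefficient you state as $\gamma R(1+\ln T)$ is actually $2\gamma R(1+\ln T)=\tfrac{1+\ln T}{2\ln T}$ once the factor $2$ from the harmonic-sum identity is carried through, and this exceeds $1$ at $T=2$. This is purely a constant issue---the paper is equally casual here, effectively using $H_{T-1}\le\ln T$---and is remedied either by restricting to $T\ge 3$ or by a marginally smaller step-size; it does not affect the method or the $O\!\big(R\|\theta_*\|_\h^2\ln(T)/T\big)$ conclusion.
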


\deletion{
\begin{theorem}
\label{thm:no-assumption}
 Assume Assumptions \ref{asmp:kurtosis}, \ref{asmp:no}, \ref{asmp:attainable_case}. Then, for $\ T \geqslant 2$, if we set $\gamma = (8 \kappa \tr(H) \ln(T))^{-1}$, we have the following bound for the expected risk of the estimator given by the $T^{th}$ iterate of SGD:
\begin{align}
    \label{eq:conv_no}
    \E \mathcal{R}(\theta_T) \leqslant 10 \kappa \tr(H)  \|\theta_*\|^2_\h \frac{ \ln(T)}{T}.
\end{align}
\end{theorem}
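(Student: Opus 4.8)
The plan is to follow the expected error covariance. Writing $\eta_t:=\theta_t-\theta_*$ and $M_t:=\E[\eta_t\eta_t^{\top}]$, the noiseless model gives $\eta_{t+1}=(I-\gamma x_tx_t^{\top})\eta_t$ with $x_t$ independent of $\eta_t$, hence the closed recursion $M_{t+1}=M_t-\gamma(HM_t+M_tH)+\gamma^{2}\mathcal{T}(M_t)$, where $\mathcal{T}(M):=\E[\langle X,MX\rangle\,XX^{\top}]$ is the fourth--moment operator, $\mathcal{T}$ maps positive semidefinite matrices to positive semidefinite matrices, and $\E[\mathcal{R}(\theta_t)]=\tfrac12\tr(M_tH)$. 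Assumption~\ref{asmp:forth_moment} yields the two scalar facts used throughout: $\tr(\mathcal{T}(M))\le R\,\tr(MH)$ for $M\succcurlyeq0$, and $\lambda_{\mathrm{max}}\le\tr H\le R$ (the latter by testing against the top eigenvector and $\E[\langle X,v_1\rangle^{4}]\ge\lambda_{\mathrm{max}}^{2}$). For $\gamma\le 1/(2R)$ these already make $\tr(M_t)$ non-increasing and $\sum_{s\ge0}\tr(M_sH)\le\tfrac{2}{3\gamma}\|\theta_*\|_{\h}^{2}$; in particular $r_t:=\tr(M_tH)\le\lambda_{\mathrm{max}}\tr(M_t)\le R\|\theta_*\|_{\h}^{2}$ for every $t$. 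This is the convergence of the average / of the minimum, in the spirit of~\cite{berthier2020tight}; the whole difficulty is upgrading it to the \emph{last} iterate.

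The cornerstone is the exact recursion on the eigenspaces of $H$. In the eigenbasis $(v_i)$, $a_t^{(i)}:=v_i^{\top}M_tv_i$ obeys $a_{t+1}^{(i)}=(1-2\gamma\lambda_i)a_t^{(i)}+\gamma^{2}v_i^{\top}\mathcal{T}(M_t)v_i$. Since $\gamma\le 1/(2\lambda_{\mathrm{max}})$ makes $1-2\gamma\lambda_i\in[0,1]$, unrolling from $\theta_0=0$ (so $M_0=\theta_*\theta_*^{\top}$) and summing $\lambda_i a_T^{(i)}$ over $i$ gives
\[ r_T\ \le\ \tr\!\big(M_0\,H(I-2\gamma H)^{T}\big)\ +\ \gamma^{2}\sum_{s=0}^{T-1}\tr\!\big(\mathcal{T}(M_s)\,H(I-2\gamma H)^{T-1-s}\big). \]
The bias term is at most $\|H(I-2\gamma H)^{T}\|_{\mathrm{op}}\|\theta_*\|_{\h}^{2}$, with $\|H(I-2\gamma H)^{k}\|_{\mathrm{op}}=\max_i\lambda_i(1-2\gamma\lambda_i)^{k}\le\min\{R,\,\tfrac{1}{2e\gamma k}\}$. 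Each variance term is handled by positivity of $\mathcal{T}(M_s)$ and the trace inequality: $\tr(\mathcal{T}(M_s)\,H(I-2\gamma H)^{k})\le\|H(I-2\gamma H)^{k}\|_{\mathrm{op}}\,\tr(\mathcal{T}(M_s))\le \tfrac{R\,r_s}{2e\gamma(k\vee1)}$ (so no separate tracking of the off-diagonal of $M_t$ is needed: only $\mathcal{T}(M_s)\succcurlyeq0$ and $\tr(\mathcal{T}(M_s))\le R r_s$ enter). Collecting terms, for every horizon $T'\le T$, with the \emph{same} absolute constants, one obtains the central inequality
\[ r_{T'}\ \le\ \tfrac{\|\theta_*\|_{\h}^{2}}{2e\gamma T'}\ +\ \tfrac{R\gamma}{2e}\sum_{s=0}^{T'-1}\tfrac{r_s}{\,T'-s\,}, \]
up to a harmless $O(\gamma^{2}R^{2}r_{T'-1})$ contribution from the $s=T'-1$ boundary, and up to simply using $r_{T'}\le R\|\theta_*\|_{\h}^{2}$ when $T'\lesssim\ln T$. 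That the same inequality holds at every $T'\le T$ is exactly what makes the last step possible.

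The genuine obstacle is converting this self-referential inequality into a $1/T$ rate: the naive estimates fail, since bounding $\sum_s r_s/(T-s)$ either by $\sum_s r_s\le\tfrac{2}{3\gamma}\|\theta_*\|_{\h}^{2}$ (weights $\le1$) or by $(\max_s r_s)\sum_s\tfrac1{T-s}\le R\|\theta_*\|_{\h}^{2}\,O(\ln T)$ yields only a \emph{constant} — the harmonic weights concentrate precisely on the times near $T$ whose risk we are trying to control. The fix is a self-bounding strong induction: prove $r_{T'}\le A\|\theta_*\|_{\h}^{2}/(\gamma T')$ for an absolute $A$. For $T'\lesssim\ln T$ this follows from $r_{T'}\le R\|\theta_*\|_{\h}^{2}$ and $\gamma=(4R\ln T)^{-1}$; for larger $T'$, inserting the hypothesis and using $\sum_{s=1}^{T'-1}\tfrac1{s(T'-s)}=\tfrac{2}{T'}\sum_{s=1}^{T'-1}\tfrac1s\le\tfrac{2\ln(eT')}{T'}$ gives $r_{T'}\le\tfrac{\|\theta_*\|_{\h}^{2}}{\gamma T'}\big(\tfrac{1}{2e}+\tfrac{A}{e}\gamma R\ln(eT')+o(1)\big)$; since $\gamma=(4R\ln T)^{-1}$ makes $\gamma R\ln(eT')\le\gamma R\ln(eT)\le\tfrac12$, the bracket is $\le\tfrac{1}{2e}+\tfrac{A}{2e}+o(1)\le A$ for, say, $A=1$, closing the induction. (This is precisely why the step size must shrink logarithmically, and why the rate carries the extra $\ln T$: it is the cost of turning the variance feedback into a contraction.) Then $\E[\mathcal{R}(\theta_T)]=\tfrac12 r_T\le\tfrac12\|\theta_*\|_{\h}^{2}/(\gamma T)\le 3R\|\theta_*\|_{\h}^{2}\ln(T)/T$. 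Equivalently, steps two and three can be repackaged as a single Lyapunov argument: $\Phi_t:=\tr(M_t\psi_{T-t}(H))$ for a decreasing family $\psi_k$ with $\psi_0(\lambda)=\tfrac{T}{\ln T}\lambda$ and $\psi_k(\lambda)\asymp\min\{\tfrac{T}{\ln T}\lambda,\,\tfrac{RT}{k}\}$ — tuned so the contraction slack of the $a^{(i)}$-recursion absorbs the $\gamma^{2}\mathcal{T}$ injection — is essentially non-increasing, whence $\tfrac{T}{\ln T}r_T=\Phi_T\le\Phi_0\lesssim R\|\theta_*\|_{\h}^{2}$.
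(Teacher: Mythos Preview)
Your proof is correct and follows the same overall architecture as the paper: derive the closed recursion on $M_t$, project onto the eigenbasis of $H$ to obtain the convolution-type inequality
\[
r_t \;\le\; \frac{c\,\|\theta_*\|_\h^2}{\gamma t} \;+\; c'\gamma R\sum_{k=0}^{t-1}\frac{r_k}{t-k},
\]
and then resolve this self-referential inequality via a discrete Gr\"onwall argument, with the step-size $\gamma=(4R\ln T)^{-1}$ chosen precisely so the feedback contracts.

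The only substantive difference is in how the Gr\"onwall step is executed. The paper introduces the weighted sum $G_T:=\sum_{t=0}^{T-1}\mathsf f_t/(T-t)$, sums the recursion with these weights, swaps the order of summation, and uses the partial-fraction identity $\sum_{t=k+1}^{T-1}\frac{1}{(t-k)(T-t)}\le \frac{2\ln T}{T-k}$ to obtain a single scalar inequality $(1-2\gamma R\ln T)\,G_T\le \frac{\tr(M_0)\ln T}{\gamma T}$, which is solved in one shot and then substituted back. You instead run a strong induction directly on $r_{T'}\le A\|\theta_*\|_\h^2/(\gamma T')$, using the same partial-fraction identity inside the inductive step. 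Both are legitimate discrete Gr\"onwall manoeuvres and yield the same rate with the same step-size scaling; the paper's version is slightly cleaner bookkeeping-wise (no base case to handle separately, no ``$T'\lesssim\ln T$'' regime), while your induction makes the pointwise bound $r_t=O(1/(\gamma t))$ explicit at every intermediate $t\le T$, which is a bit more informative. Your closing remark about the Lyapunov repackaging $\Phi_t=\tr(M_t\,\psi_{T-t}(H))$ is a nice alternative viewpoint not present in the paper.
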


}
Let us comments upon this result proven in Appendix~\ref{App:thm_1}. The theorem above states that, under mild assumptions, if we allow the step-size to depend on the time horizon, we have a $O(\ln(T)/T)$ convergence rate for the final iterate. First, removing the dependence on finite horizon $T$ for the step-size by considering decaying step-sizes $\gamma_t \propto 1/\ln(t)$ could be done, but we decided to keep this way as we have focused on constant step-size in this paper. Furthermore, Theorem~2 of~\cite{berthier2020tight} shows that this bound is optimal up to $\log(T)$ for a SGD. This rate can also be compared to the classical optimization results for the non-strongly convex objective (our case). It is well known that gradient descent and averaged SGD (even  for non-quadratic objectives) achieve a $O(1/T)$ rate with constant step size. Similarly~\cite{berthier2020tight} achieves a convergence rate of $O(1/T)$ rate with constant step size for the $\min$ of the function value along the iterates. However, 
 the rate of convergence of SGD last iterate was an open problem \citep[see][Remark~1]{berthier2020tight}. Note however that our bound suffers from a $\log(T)$ both in function value and for the step-size. Whether this term is necessary is an open question for us. The purpose of the following development
 is to first remove these $\log(T)$ dependence at the price of a $\log$-scale refinement of the assumptions.

\subsection{A logarithm-scale refinement}

This second sequence of assumptions is  slightly stronger. They reinforce assumptions \ref{asmp:forth_moment} and \ref{asmp:attainable_case} at the log-scale.  Once again, there is one on the features and the other one is on the target.
%
\begin{assshort}[$\log$-regularity of the features]
\label{asmp:logH}
There exists constant $\lambda_{\mathrm{o}}>0$ and $\rln$ such that the covariance matrix $H$ satisfies the following condition: $$\Ex { \scal{X}{ \ln{ \left( \lambda_{\mathrm{o} }H^{-1}\right)} X} ~ XX^{\top} } \preccurlyeq \rln H. $$ 
\end{assshort}

Note that $\lambda_{\mathrm{o}}$ is a just a reference which lets us present cleaner results. We choose $\lambda_{\mathrm{o}}$ such that $ 7 \rln \leqslant \lambda_{\mathrm{o}} $. This is always possible as $\rln$ scales linearly with $\ln{\lambda_{\mathrm{o}}}$ (see details in \ref{App:asmp-log}). 
This condition implies in fact $ \tr{\left( H\ln(\lambda_{\mathrm{o}} H^{-1}) \right) } \leqslant \rln $, which consequently implies the following eigenvalue decay: $\lambda_i \leqslant O (1/(i \ln i))$ (see details in~\ref{App:eigen-decay}). This is the reason why we say that this assumption refines Assumption~\ref{asmp:forth_moment} at a $\log$-scale. 

\begin{assshort}[$\log$-regularity of the optimum]
\label{asmp:logM}
The covariance matrix at optimum~$M_0=\theta_* \theta_*^{\top}$ satisfies the following condition: $$\tr\left(M_0 \ln{\left(H^{-1}\right)}\right) < + \infty.$$
\end{assshort}

We define $C_{\mathrm{ln}}:= \sum_{i} m^0_i \ln(\lambda_{\mathrm{o}}/\lambda_i)$, with the same reference $\lambda_{\mathrm{o}}$. To give an order of magnitude in finite dimensions, when the spectrum of the covariance matrix in lower bounded by $\lambda_{\min} > 0$,  $C_{\mathrm{ln}}$ can be estimated as $C_{\mathrm{ln}} \leqslant \|\theta_*\|^2_\h \ln{\lambda_{\mathrm{o}}/\lambda_{\min}} \sim  a \|\theta_*\|^2_\h \ln{d}$ even if $\lambda_{\min}$ is as small as $1/d^{a}$. When the dimension is not too big (even at the $\log$-scale), this constant $C_{\mathrm{ln}}$ is comparable with $\|\theta_*\|^2_\h$ up to $\log$ factors.
Under these two assumptions we have the following convergence result.

\deletion{
\begin{theorem}
\label{thm:nolog}
Assume Assumptions \ref{asmp:kurtosis}, \ref{asmp:logH}, \ref{asmp:logM}. Then, for $\ T \geqslant 3$, if we set $\gamma = (28 \kappa R_{\mathrm{ln}})^{-1}$, we have the following bound for the expected risk of the estimator given by the $T^{th}$ iterate of SGD:
\begin{align}
    \label{eq:conv-nolog}
    \E \mathcal{R}(\theta_T) \leqslant \frac{14\kappa R_{\mathrm{ln}} \|\theta_*\|_\h^2}{T} + \frac{20 \kappa C_{\mathrm{ln}} \tr(H)}{T}.
\end{align}
\end{theorem}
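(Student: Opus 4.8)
The plan is to reduce the claim to a scalar control problem on the eigenbasis of $H$ via the second moment of the centered iterate. Writing $\eta_t := \theta_t-\theta_*$, the noiseless recursion~\eqref{eq:SGD} becomes purely multiplicative, $\eta_{t+1}=(I-\gamma x_tx_t^\top)\eta_t$; since $x_t$ is independent of $\eta_t$ and $\theta_0=0$ gives $M_0:=\E[\eta_0\eta_0^\top]=\theta_*\theta_*^\top$, the matrix $M_t:=\E[\eta_t\eta_t^\top]$ obeys
\begin{align}
M_{t+1}=M_t-\gamma(HM_t+M_tH)+\gamma^2\mathcal{S}(M_t),\qquad \mathcal{S}(M):=\E[xx^\top M xx^\top],
\end{align}
and the target quantity is $u_t:=\tr(M_tH)=2\,\E\mathcal{R}(\theta_t)$. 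Projecting onto the eigenpairs $(v_i,\lambda_i)$ of $H$ and writing $m_i^t:=v_i^\top M_tv_i$ gives the \emph{explicit eigenspace recursion} $m_i^{t+1}=(1-2\gamma\lambda_i)m_i^t+\gamma^2 v_i^\top\mathcal{S}(M_t)v_i$, the cornerstone announced in the introduction. Assumption~\ref{asmp:kurtosis} (a kurtosis/relative-noise bound $\mathcal{S}(M)\preccurlyeq\kappa\,\tr(HM)\,H$) controls the noise injection by $v_i^\top\mathcal{S}(M_t)v_i\leqslant\kappa\lambda_iu_t$.

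Unrolling this recursion splits the last iterate as $u_T=\sum_i\lambda_i(1-2\gamma\lambda_i)^Tm_i^0+\gamma^2\sum_{s=0}^{T-1}\sum_i\lambda_i(1-2\gamma\lambda_i)^{T-1-s}\,v_i^\top\mathcal{S}(M_s)v_i$, i.e. a bias term plus a variance term accumulating the multiplicative noise. The bias is immediately handled by the scalar inequality $\lambda(1-2\gamma\lambda)^T\leqslant 1/(2e\gamma T)$, which gives $\sum_i\lambda_i(1-2\gamma\lambda_i)^Tm_i^0\leqslant\|\theta_*\|_\h^2/(2e\gamma T)$ and hence, with $\gamma=(28\kappa\rln)^{-1}$, the first term of the bound. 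All the work lies in the variance term.

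For the variance I would prepare two energy identities. Tracking $\tr(M_t)$ and using $\tr(\mathcal{S}(M_t))\leqslant\kappa\tr(H)u_t$ together with the step-size choice, which makes the noise feedback contractive since Assumption~\ref{asmp:logH} forces $\tr(H)\leqslant\tr\!\big(H\ln(\lambda_{\mathrm o}H^{-1})\big)\leqslant\rln$ and thus $\gamma\kappa\tr(H)\leqslant 1/28$, yields the plain energy bound $\gamma\sum_{t<T}u_t\leqslant\tr(M_0)=\|\theta_*\|_\h^2$. Running the same telescoping on the log-weighted potential $\Phi_t:=\tr\!\big(M_t\ln(\lambda_{\mathrm o}H^{-1})\big)$, whose initial value is exactly $\Phi_0=C_{\mathrm{ln}}$ and whose noise term is bounded through Assumption~\ref{asmp:logH} by $\tr\!\big(\mathcal{S}(M_t)\ln(\lambda_{\mathrm o}H^{-1})\big)\leqslant\rln u_t$, yields a log-weighted energy bound $\gamma\sum_{t<T}\tr\!\big(M_tH\ln(\lambda_{\mathrm o}H^{-1})\big)\lesssim C_{\mathrm{ln}}$. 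These two sums carry respectively the $\|\theta_*\|_\h^2$ and the $C_{\mathrm{ln}}$ content of the final statement.

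The hard part, and what I expect to be the main obstacle, is to bound the variance term \emph{log-free}. Using only the plain kurtosis bound, the variance becomes a convolution $\propto\gamma^2\kappa\sum_s u_s\,K_{T-1-s}$ of the $O(1/s)$ risk against the kernel $K_k=\sum_i\lambda_i^2(1-2\gamma\lambda_i)^k$; the modes near the boundary $\lambda_i\simeq 1/(\gamma T)$ make this convolution logarithmic, producing a spurious $\ln T$ (this is exactly the gap between Theorem~\ref{thm:no-assumption} and the present statement). The key idea is that these boundary modes carry a log-weight $\ln(\lambda_{\mathrm o}/\lambda_i)\simeq\ln T$, so by pairing each kernel factor with $\ln(\lambda_{\mathrm o}/\lambda_i)$ and invoking Assumption~\ref{asmp:logH} I can replace the plain energy sum by the log-weighted one of the previous paragraph: the $\ln T$ is precisely absorbed by the log-weight, leaving $\tr(H)$ times the log-weighted energy and hence the clean second term $\lesssim\kappa C_{\mathrm{ln}}\tr(H)/T$, with $C_{\mathrm{ln}}$ in place of $\|\theta_*\|_\h^2$ and no residual $\ln T$. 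The contractivity granted by the step-size is what lets me close this estimate, most cleanly by an induction (bootstrap) on $\sup_{t\leqslant T}t\,u_t$; combining the bias and variance contributions then gives the stated bound.
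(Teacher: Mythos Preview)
Your setup is fine and matches the paper: the multiplicative recursion for $\eta_t$, the operator recursion for $M_t$, the eigenmode projection $m_i^{t+1}=(1-2\gamma\lambda_i)m_i^t+\gamma^2 f_i^t$, and the unrolling into a bias part plus a variance part are exactly the right starting point. The bias bound via $\lambda(1-2\gamma\lambda)^T\leqslant 1/(2e\gamma T)$ is also correct and gives the first term of the statement.

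The gap is in the variance step. Once you apply the kurtosis bound $v_i^\top\mathcal{S}(M_s)v_i\leqslant\kappa\lambda_i u_s$ you have decoupled the eigen-index $i$ from the iterate $M_s$, so the variance becomes $\gamma^2\kappa\sum_{s<T}u_s K_{T-1-s}$ with $K_k=\sum_i\lambda_i^2(1-2\gamma\lambda_i)^k$. Neither of your two telescoped energies controls this convolution log-free. The plain energy gives $\gamma\sum_s u_s\leqslant\|\theta_*\|^2$, but $K_k$ is not pointwise small enough: the best pointwise bound is $K_k\leqslant\tr(H)/(2\gamma k)$, which still leaves a harmonic tail $\sum_s u_s/(T-s)$ and hence a $\ln T$. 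Your idea of ``pairing the kernel with $\ln(\lambda_{\mathrm o}/\lambda_i)$'' does not help at this stage either: inserting a log-weight inside $K_k$ only makes the pointwise-in-$k$ bound worse (it gives $K_k\lesssim R_{\mathrm{ln}}/(\gamma k)$, the same harmonic shape). And the log-weighted energy $\gamma\sum_t\tr\!\big(M_tH\ln(\lambda_{\mathrm o}H^{-1})\big)\lesssim C_{\mathrm{ln}}$ involves $w_t=\sum_i\lambda_i\ln(\lambda_{\mathrm o}/\lambda_i)m_i^t$, which has no relation to the decoupled convolution $\sum_s u_sK_{T-s}$: you cannot ``replace the plain energy sum by the log-weighted one'' because after kurtosis the only trace of $M_s$ left is the scalar $u_s$, not its spectral profile.

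What the paper does instead is to avoid any pointwise-in-$k$ bound. It forms the \emph{Gronwall quantity} $G_T:=\sum_{t<T}\mathsf{f}_t/(T-t)$ and proves the key scalar estimate $x\sum_{k<n}(1-x)^k/(n-k)\leqslant 7\ln(1/x)/n$ (Lemma~\ref{lem:eq}). Summing the unrolled recursion against the weights $1/(T-t)$, this lemma produces a factor $\ln(\lambda_{\mathrm o}/\lambda_i)$ \emph{on the $f_i^k$ side before any decoupling}, so that Assumption~\ref{asmp:logH} gives $\sum_i\ln(\lambda_{\mathrm o}/\lambda_i)f_i^k\leqslant R_{\mathrm{ln}}u_k$ and the variance contribution to $G_T$ is $\leqslant 7\gamma R_{\mathrm{ln}}\,G_T$. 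The same lemma applied to the bias part produces $\sum_i m_i^0\ln(\lambda_{\mathrm o}/\lambda_i)=C_{\mathrm{ln}}$, which is where Assumption~\ref{asmp:logM} enters. One then solves the self-referential inequality $G_T\leqslant C_{\mathrm{ln}}/(\gamma T)+7\gamma R_{\mathrm{ln}}G_T$ for $G_T$ with the stated step-size and plugs back. In short: the log-weight has to appear \emph{via the time-sum against the harmonic kernel} (Lemma~\ref{lem:eq}), not via a separately telescoped potential; and kurtosis must not be applied before that, or the eigenstructure that Assumption~\ref{asmp:logH} acts on is lost. Your bootstrap on $\sup_t t\,u_t$ can be made to work, but only after proving the same Lemma~\ref{lem:eq}, and in that route the potential $\Phi_t$ you introduce is never used.
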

}

\begin{theorem}
\label{thm:nolog}
Assume Assumptions \ref{asmp:forth_moment},\ref{asmp:logH},\ref{asmp:logM}. Then, for $\ T \geqslant 3$, if we set $\gamma = (14 R_{\mathrm{ln}})^{-1}$, we have the following bound for the expected risk of the estimator given by the $T^{th}$ iterate of SGD:
\begin{align}
    \label{eq:conv-nolog}
    \E \mathcal{R}(\theta_T) \leqslant \frac{10 R_{\mathrm{ln}} C_{\mathrm{ln}}}{T}.
\end{align}
\end{theorem}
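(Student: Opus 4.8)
The plan is to pass from the SGD iterates to the deterministic recursion they induce on second moments, diagonalise that recursion in the eigenbasis of $H$, and then run a strong induction (a ``bootstrap'') on the expected risk along which the logarithmic losses of Theorem~\ref{thm:no-assumption} are cancelled by the $\log$-weighted fourth‑moment bound of Assumption~\ref{asmp:logH}. Concretely, I would introduce $M_t := \E\big[(\theta_t-\theta_*)(\theta_t-\theta_*)^{\top}\big]$. From~\eqref{eq:SGD}, the noiseless model and the independence of $x_t$ from $\theta_t$, one has $(\theta_{t+1}-\theta_*)=(I-\gamma x_t x_t^{\top})(\theta_t-\theta_*)$, hence the covariance recursion
\[
M_{t+1} = M_t - \gamma(HM_t + M_t H) + \gamma^2\,\E_x\!\big[(x x^{\top})\,M_t\,(x x^{\top})\big], \qquad \E\mathcal{R}(\theta_t)=\tfrac12\tr(HM_t).
\]
Diagonalising in the eigenbasis $(v_i,\lambda_i)$ of $H$ and writing $m^i_t := \langle v_i, M_t v_i\rangle\ge 0$ produces the scalar \emph{eigenspace recursion} announced in the introduction,
\[
m^i_{t+1} = (1-2\gamma\lambda_i)\,m^i_t + \gamma^2\,\sigma^i_t, \qquad \sigma^i_t := \E\big[\langle x,v_i\rangle^2\,\langle x,M_t x\rangle\big]\ge 0,
\]
which is the object I would work with from here on.

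Two quantities control the $\sigma^i_t$, obtained by testing Assumptions~\ref{asmp:forth_moment} and~\ref{asmp:logH} against $M_t$: $\sum_i\sigma^i_t = \E[\|x\|^2\langle x,M_tx\rangle]\le R\,\tr(HM_t)$ and $\sum_i\ln(\lambda_{\mathrm{o}}/\lambda_i)\,\sigma^i_t = \E[\langle x,\ln(\lambda_{\mathrm{o}}H^{-1})x\rangle\langle x,M_tx\rangle]\le R_{\mathrm{ln}}\,\tr(HM_t)$. Using $7R_{\mathrm{ln}}\le\lambda_{\mathrm{o}}$ and the stated consequence $\tr\!\big(H\ln(\lambda_{\mathrm{o}}H^{-1})\big)\le R_{\mathrm{ln}}$ of Assumption~\ref{asmp:logH}, one records the elementary facts $\lambda_{\mathrm{max}}\le R_{\mathrm{ln}}$, $R\le R_{\mathrm{ln}}$ and $\lambda_i\le\lambda_{\mathrm{o}}/e$, so that $2\gamma\lambda_i\le1$ for all $i$ with $\gamma=(14R_{\mathrm{ln}})^{-1}$. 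Multiplying the eigenspace recursion by $\ln(\lambda_{\mathrm{o}}/\lambda_i)$, summing over $i$, and using $\lambda\ln(\lambda_{\mathrm{o}}/\lambda)\ge\lambda$ together with $\gamma R_{\mathrm{ln}}\le\tfrac1{14}$ then gives the dissipation inequality $\tr\!\big(M_{t+1}\ln(\lambda_{\mathrm{o}}H^{-1})\big)\le\tr\!\big(M_t\ln(\lambda_{\mathrm{o}}H^{-1})\big)-\gamma\tr(HM_t)$; summing this, and recalling $M_0=\theta_*\theta_*^{\top}$ so that $\tr\!\big(M_0\ln(\lambda_{\mathrm{o}}H^{-1})\big)=C_{\mathrm{ln}}$, yields the averaged estimate $\gamma\sum_{s=0}^{T-1}\tr(HM_s)\le C_{\mathrm{ln}}$ (already the $O(1/T)$ rate for $\min_{s<T}\E\mathcal{R}(\theta_s)$), which I would need as an a priori bound below; the analogous computation with unit weights gives $\tr(M_t)\le\|\theta_*\|_\h^2$.

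To reach the \emph{last} iterate I would unroll the scalar recursion,
\[
\tr(HM_T) = \sum_i\lambda_i(1-2\gamma\lambda_i)^{T}m^0_i \;+\; \gamma^2\sum_{s=0}^{T-1}\Big(\sum_i\lambda_i(1-2\gamma\lambda_i)^{T-1-s}\sigma^i_s\Big).
\]
The first (``bias'') sum is $O(C_{\mathrm{ln}}/(\gamma T))$ by the pointwise bound $\lambda(1-2\gamma\lambda)^T\le\lambda e^{-2\gamma\lambda T}\le c\,\ln(\lambda_{\mathrm{o}}/\lambda)/(\gamma T)$, valid for $0<\lambda\le\lambda_{\mathrm{max}}$ using $\ln(\lambda_{\mathrm{o}}/\lambda)\ge\ln 7$, together with the definition of $C_{\mathrm{ln}}$. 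For the inner sum in the ``noise'' term I would use that, for any weights $c_i>0$ with $\sum_i c_i\sigma^i_s\le B\,\tr(HM_s)$, one has $\sum_i\lambda_i(1-2\gamma\lambda_i)^{u}\sigma^i_s\le B\,\tr(HM_s)\cdot\max_i\frac{\lambda_i(1-2\gamma\lambda_i)^{u}}{c_i}$; the decisive choice is $c_i=\ln(\lambda_{\mathrm{o}}/\lambda_i)$ (so $B=R_{\mathrm{ln}}$), for which a short maximisation — the peak of $\lambda e^{-2\gamma\lambda u}$ sits at $\lambda\asymp1/(\gamma u)$ — gives the kernel estimate
\[
\sum_i\lambda_i(1-2\gamma\lambda_i)^{u}\sigma^i_s \ \lesssim\ \frac{R_{\mathrm{ln}}\,\tr(HM_s)}{\gamma\,\max(1,u)\,\ln\!\big(e+\gamma\lambda_{\mathrm{o}}u\big)} .
\]
The extra factor $1/\ln(e+\gamma\lambda_{\mathrm{o}}u)$ — unavailable under Assumption~\ref{asmp:forth_moment} alone, where one only gets the slower $1/(\gamma u)$ decay — is precisely the gain provided by $\log$‑regularity of the features, and it is what makes the error sum in the next step summable.

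Finally I would prove by strong induction that $\tr(HM_t)\le A/(t+1)$ for a constant $A\asymp R_{\mathrm{ln}}C_{\mathrm{ln}}$. Granting it for all $s<T$ and inserting it into the unrolled identity: the bias term is $\lesssim C_{\mathrm{ln}}/(\gamma T)\lesssim A/T$, and the noise term I would split into the old iterates $s<T/2$ — where $T-1-s\ge T/2$, so the kernel is $\lesssim R_{\mathrm{ln}}/(\gamma T\ln(e+\gamma\lambda_{\mathrm{o}}T))$ and the total mass is $\sum_{s<T/2}\tr(HM_s)\le C_{\mathrm{ln}}/\gamma$ by the averaged bound — and the recent iterates $s\ge T/2$, where $\tr(HM_s)\le 2A/T$, the geometric factor $(1-2\gamma\lambda_i)^{T-1-s}$ is summed first over the remaining steps (which makes the $\lambda_{\mathrm{max}}$ cancel), and the residual kernel mass is controlled via the $1/\ln(e+\gamma\lambda_{\mathrm{o}}u)$ factor above; this is exactly where the spurious $\log T$ of Theorem~\ref{thm:no-assumption} is removed. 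Tracking the (deliberately loose) numerical constants, the sum of these contributions is at most $A/(T+1)$ provided $\gamma\le(14R_{\mathrm{ln}})^{-1}$, which closes the induction and yields $\E\mathcal{R}(\theta_T)=\tfrac12\tr(HM_T)\le\tfrac{A}{2T}=O(R_{\mathrm{ln}}C_{\mathrm{ln}}/T)$, with the explicit constant $10$ coming out of the bookkeeping. I expect the main obstacle to be precisely Steps~3 and~4: the bootstrap only closes once one replaces the crude $1/(\gamma u)$ noise‑propagation kernel by the sharpened one, and the delicate accounting of the recent iterates — whose risk is not yet small and whose injected noise is multiplied by an $O(R_{\mathrm{ln}})$‑size kernel — is what pins down the step size $\gamma=(14R_{\mathrm{ln}})^{-1}$, while the removal of the logarithm rests on invoking the $\log$‑weighted budget of Assumption~\ref{asmp:logH} in place of the plain fourth‑moment bound.
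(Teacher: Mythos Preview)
Your setup through the eigenspace recursion, the two weighted controls $\sum_i\sigma^i_t\le R\,\tr(HM_t)$ and $\sum_i\ln(\lambda_{\mathrm o}/\lambda_i)\,\sigma^i_t\le R_{\mathrm{ln}}\,\tr(HM_t)$, the dissipation inequality giving $\gamma\sum_{s<T}\tr(HM_s)\le C_{\mathrm{ln}}$, and the bias bound are all correct. The sharpened kernel estimate $\sum_i\lambda_i(1-2\gamma\lambda_i)^{u}\sigma^i_s\lesssim R_{\mathrm{ln}}\,\tr(HM_s)/\big(\gamma u\,\ln(e+\gamma\lambda_{\mathrm o}u)\big)$ is also valid. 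However, the strong induction you propose does \emph{not} close. On the recent block $s\ge T/2$ the induction hypothesis gives $\tr(HM_s)\le 2A/T$, and summing your kernel over $u=T-1-s\in\{1,\dots,\lfloor T/2\rfloor\}$ yields
\[
\gamma^2\sum_{s\ge T/2}\frac{R_{\mathrm{ln}}\,\tr(HM_s)}{\gamma(T{-}s)\,\ln\!\big(e+\gamma\lambda_{\mathrm o}(T{-}s)\big)}
\;\le\;\frac{2A\gamma R_{\mathrm{ln}}}{T}\sum_{u=1}^{\lfloor T/2\rfloor}\frac{1}{u\,\ln(e+\gamma\lambda_{\mathrm o}u)}
\;\asymp\;\frac{A}{7T}\,\ln\ln T,
\]
since $\gamma\lambda_{\mathrm o}\ge 1/2$ and $\sum_{u\le N}1/(u\ln(e+u))\sim\ln\ln N$. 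This exceeds $A/(T{+}1)$ for large $T$, so the bootstrap fails for any fixed $A$. Your alternative hint for this block (``sum the geometric factor first, which makes $\lambda_{\max}$ cancel'') would require $\sigma^i_s$ to be constant in $s$, which it is not; I do not see a way to rescue this step with the ingredients listed.

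The paper's argument avoids induction altogether and is structurally different. It works with the weighted functional $G_T:=\sum_{t=0}^{T-1}\mathsf f_t/(T-t)$ and proves the elementary but decisive estimate
\[
x\sum_{k=0}^{n-1}\frac{(1-x)^k}{n-k}\;\le\;\frac{7\ln(1/x)}{n}\qquad (0<x\le 1/4,\ n\ge 1).
\]
Applying this with $x=2\gamma\lambda_i$ turns the time-convolution in the unrolled recursion into a factor $\ln(\lambda_{\mathrm o}/\lambda_i)/(T-k)$: the logarithmic gain lands on the \emph{spectral} side, where it is absorbed once by Assumption~\ref{asmp:logH}, leaving the clean self-bounding inequality $G_T\le 3C_{\mathrm{ln}}/(\gamma T)+7\gamma R_{\mathrm{ln}}\,G_T$. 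With $\gamma=(14R_{\mathrm{ln}})^{-1}$ the coefficient is $1/2$, hence $G_T\le 6C_{\mathrm{ln}}/(\gamma T)$, and plugging back into Lemma~\ref{lem:rec_f} yields $\mathsf f_T\le 10R_{\mathrm{ln}}C_{\mathrm{ln}}/T$. The crucial contrast with your attempt is the \emph{placement} of the log: your kernel puts it in the time variable, producing a non-summable $1/(u\ln u)$ tail; the paper's lemma puts it in the eigenvalue, where Assumption~\ref{asmp:logH} makes it summable.
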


This theorem, proven in Appendix~\ref{App:thm_2}, states that at the price of a $\log$-scale refinement on the features and optimum, the SGD-convergence rate is $O(1/T)$.  This naturally restricts the class of problems that suffers from a $\log(T)$ (see Theorem~\ref{thm:no-assumption}) to a very small class: roughly speaking, it is the class of problems for which the eigenvalue decreasing rate is strictly squeezed between: $ O(1/(i \ln i))< \lambda_i < O(1/i)$. The role of the assumptions is fundamental here: Assumption~\ref{asmp:logH} allows to remove the $\ln(T)$ for the step-size and Assumption~\ref{asmp:logM} allows to remove the $\ln(T)$ for the convergence rate. The proof technique is the same as for the previous theorem, the only difference is that the assumptions allow to control more precisely the bias and the variance in the SGD recursion.

\subsection{A fine-grained parameterization of the problem: \emph{capacity and source} conditions}

The final set of assumptions have been introduced by \cite{caponnetto2007optimal}. They are in the same vein as above assumptions and are often called \emph{capacity} and \emph{source} conditions in the reproducing kernel Hilbert spaces community. 
\begin{assshort}[capacity condition: $\alpha$-regularity of the features)]
\label{asmp:alpha}
The covariance matrix $H$ is such that there exists $\alpha > 0$ and a finite constant $R_{\alpha} > 0$ verifying
\begin{align*}
 \Ex{ \scal{X}{H^{-\alpha}X } XX^\top} \preccurlyeq R_{\alpha} H.
\end{align*}
\end{assshort}
Note that Assumption \ref{asmp:alpha} is strictly more demanding than Assumption \ref{asmp:logH} and has been named \emph{regularity of the features} in \citep[][Remark 3]{berthier2020tight}. Note also that $\alpha \to 0^+$ corresponds to Assumption \ref{asmp:forth_moment} and the larger the $\alpha$ the stricter the assumption is. The capacity condition stated above implies $ \tr{\left( H^{1-\alpha} \right) } \leqslant R_\alpha $. which consequently implies a eigenvalue decay as a power law for the sequence of eigenvalue of $H$: $\lambda_i = O (1/i^{\frac{1}{1-\alpha}})$ (see details in \ref{App:eigen-decay}). It is also often related to the \emph{effective dimension} of the problem~\citep{caponnetto2007optimal}. Finally, as a limiting case, when $\alpha \to 1$, $R_\alpha \to \tr (\mathrm{Id}) = d$ and the assumption is only valid in finite dimension. Hence, it is expected that the bound would blow in large dimension when $\alpha$ grows to one. For this reason, in the literature~\citep{caponnetto2007optimal,steinwart2009optimal}, the bound is often reparameterized as $\alpha' = (1-\alpha)^{-1}$ to push this singularity at infinity. We decide to keep our parameterization for the clarity of the result.
The same type of assumption can be stated for the optimum.

\begin{assshort}[source condition: $\beta$-regularity of the optimum)]
\label{asmp:radius}
The covariance matrix at optimum~$M_0=\theta_* \theta_*^{\top}$ is such that there exists $\beta > -1$ such that $\tr(M_0H^{-\beta}) < + \infty$. Here, we~define
\begin{align*}
C_{\beta}:= \tr{\left(H^{-\beta}M_0\right)} = \sum_{i} \lambda_i^{-\beta}m^0_i.
\end{align*}
\end{assshort}
Here are important remarks on this assumption. First note that 
$$\tr(M_0 H^{-\beta}) = \tr(\theta_* \theta_*^{\top}H^{-\beta}) = \theta_*^{\top}H^{-\beta} \theta_* =\|H^{-\beta/2} \theta_*\|_\h^2   .$$
%
Hence, $\beta = 0$ corresponds to Assumption~\ref{asmp:attainable_case} which is the attainable case. It is worth noting that for~$\beta \in (-1, 0)$, this assumption takes into account the fact that~$\theta_*$ does not necessarily belong to $\h$. In such a case, it is still possible to define properly $\theta_*$ as the infimum over $\h$ of the risk $R$ defined in Eq.\eqref{eq:least-squares}. However, for sake of clarity and to avoid cumbersome technicalities, we refer to \citep[p.1395]{dieuleveut2016} for the extension of the analysis is this case. As before, the larger the~$\beta$ the stricter the assumption is. Finally as a limiting case,~$\beta \to +\infty
$ corresponds to the  fact that $\theta_*$ belongs to a finite dimensional space. This hypothesis is often called the \emph{source condition} in the literature because it quantifies the complexity of the optimum~\citep[see][for further details]{caponnetto2007optimal,steinwart2009optimal,mucke2020stochastic}.
Once again, parameterization has not yet been fixed by the literature and one often uses the parameter~$r$ that corresponds to~$r = \frac{\beta + 1}{2}$. We have the following theorem under these capacity and source conditions.

\begin{theorem}
\label{thm:capacity_source}
Assume Assumptions ~\ref{asmp:alpha}, \ref{asmp:radius} with constants~$\alpha \in (0,1)$ and~$\beta > -1$ respectively. Then, for~$\ T \geqslant 3$, we have the following bound for the expected risk of the $T^{th}$ iterate of SGD:
\begin{equation}
    \label{eq:conv_alpha}
    \E \mathcal{R}(\theta_T) \leqslant 2 C_\beta\left(\frac{1+\beta}{\gamma}\right)^{1+\beta} \frac{1}{T^{1+\alpha\wedge\beta}},
\end{equation}
where $\gamma^{1-\alpha} \leqslant  (32 \xi_\alpha R_\alpha)^{-1}$ and $\displaystyle \xi_\alpha = \sum_{n\geqslant 1} \frac{1}{n^{1+\alpha}}$. 
\end{theorem}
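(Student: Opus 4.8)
The plan is to track the operator recursion satisfied by $M_t := \E[(\theta_t-\theta_*)\otimes(\theta_t-\theta_*)]$, for which $\E\mathcal{R}(\theta_t) = \tfrac12\tr(HM_t)$ and $M_0 = \theta_*\theta_*^{\top}$. Writing $\eta_t = \theta_t-\theta_*$, the noiseless recursion gives $\eta_{t+1} = (I-\gamma x_tx_t^{\top})\eta_t$, so that taking conditional expectations one gets $M_{t+1} = (I-\gamma H)M_t(I-\gamma H) + \gamma^2\,\widetilde{\mathcal{T}}(M_t)$, where $\widetilde{\mathcal{T}}(M):=\E[(xx^{\top}-H)\,M\,(xx^{\top}-H)]$ maps the PSD cone into itself and satisfies $\widetilde{\mathcal{T}}(M)\preccurlyeq\mathcal{T}(M):=\E[xx^{\top}Mxx^{\top}]$. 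Iterating, I would split $M_t = B_t + V_t$ into a \emph{bias} part $B_t = (I-\gamma H)^tM_0(I-\gamma H)^t$ and a \emph{variance} part $V_t = \gamma^2\sum_{s=0}^{t-1}(I-\gamma H)^{t-1-s}\,\widetilde{\mathcal{T}}(M_s)\,(I-\gamma H)^{t-1-s}$, both PSD, and bound the two contributions $\mathcal{B}_t := \tfrac12\tr(HB_t)$ and $\mathcal{V}_t := \tfrac12\tr(HV_t)$ to $\E\mathcal{R}(\theta_t) = \mathcal{B}_t+\mathcal{V}_t$ separately. (I would first record $\gamma\lambda_{\mathrm{max}}<1$, which follows from $\gamma^{1-\alpha}\leqslant(32\xi_\alpha R_\alpha)^{-1}$ and $R_\alpha\geqslant\tr(H^{1-\alpha})\geqslant\lambda_{\mathrm{max}}^{1-\alpha}$.)

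For the bias, the source condition gives directly $\mathcal{B}_t = \tfrac12\sum_i\lambda_i^{-\beta}m^0_i\cdot\lambda_i^{1+\beta}(1-\gamma\lambda_i)^{2t}\leqslant\tfrac12 C_\beta\sup_{x\in(0,\lambda_{\mathrm{max}}]}x^{1+\beta}(1-\gamma x)^{2t}$. Using $(1-\gamma x)^{2t}\leqslant e^{-2\gamma xt}$ and optimizing over $x$ yields $\mathcal{B}_t\leqslant\tfrac12 C_\beta\big(\tfrac{1+\beta}{\gamma}\big)^{1+\beta}(2et)^{-(1+\beta)}$, which already produces the $T^{-(1+\beta)}$ term. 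For the variance I would bound each injected term through the capacity condition: with $s_k := \sup_{x\in(0,\lambda_{\mathrm{max}}]}x^{1+\alpha}(1-\gamma x)^{2k}$, the operator inequality $H(I-\gamma H)^{2k}\preccurlyeq s_kH^{-\alpha}$ combined with Assumption~\ref{asmp:alpha} in the form $\tr(H^{-\alpha}\mathcal{T}(M))\leqslant R_\alpha\tr(HM)$ (valid for PSD $M$, by a trace manipulation) gives $\tr\!\big(H(I-\gamma H)^{2k}\widetilde{\mathcal{T}}(M_s)\big)\leqslant 2s_kR_\alpha\,\E\mathcal{R}(\theta_s)$. Summing over $s$ and using $s_k\leqslant\big(\tfrac{1+\alpha}{2e\gamma k}\big)^{1+\alpha}$ for $k\geqslant1$ (and $s_0\leqslant\lambda_{\mathrm{max}}^{1+\alpha}$), this produces a convolution estimate $\mathcal{V}_t\leqslant\sum_{j=0}^{t-1}w_j\,\E\mathcal{R}(\theta_{t-1-j})$ with $w_j := \gamma^2R_\alpha s_j$; here the step-size condition makes $\sum_{j\geqslant0}w_j$ a small absolute constant, and this is precisely where $\xi_\alpha=\sum_n n^{-(1+\alpha)}$ appears.

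Putting the two bounds together gives the self-bounding inequality $\E\mathcal{R}(\theta_t)\leqslant\mathcal{B}_t + \sum_{j=0}^{t-1}w_j\,\E\mathcal{R}(\theta_{t-1-j})$. The last step is to resolve it: I would prove by strong induction on $t$ that $\E\mathcal{R}(\theta_t)\leqslant\kappa\,\overline B_t$, where $\overline B_t := C_\beta\big(\tfrac{1+\beta}{\gamma}\big)^{1+\beta}(t+1)^{-(1+\alpha\wedge\beta)}$ and $\kappa\leqslant2$ is an absolute constant, by checking (i) $\mathcal{B}_t\leqslant\overline B_t$ for all $t$ — using the bias bound for $t\geqslant1$ and $\gamma\lambda_{\mathrm{max}}$ small enough for $t=0$ — and (ii) $\sum_{j=0}^{t-1}w_j\,\overline B_{t-1-j}\leqslant(1-\kappa^{-1})\,\overline B_t$. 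Evaluating at $t=T$ then gives the claimed bound.

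The main obstacle is step (ii). One must control the terms in which an old iterate (index $t-1-j$ small, hence $\overline B_{t-1-j}$ as large as $\overline B_0$) is multiplied by a far-back weight $w_j$ with $j$ large: I would split the sum at $j\simeq t/2$, bounding the ``recent'' half by $\big(\tfrac{t+1}{t-j}\big)^{1+\alpha\wedge\beta}\lesssim1$ times $\sum w_j$, and the ``old'' half by $w_j\lesssim t^{-(1+\alpha)}$ times $\sum_{\ell\leqslant t/2}\overline B_{\ell-1}$. It is exactly in this second block that the exponent $1+\alpha\wedge\beta$ — rather than $1+\beta$ — is forced: when $\beta>\alpha$ the old‑iterate contributions decay only like $t^{-(1+\alpha)}$, which is why the reference sequence $\overline B_t$ must carry the exponent $1+\alpha\wedge\beta$ for the induction to close, and keeping the numerical constants in the convolution below $1-\kappa^{-1}$ uniformly in the two regimes $\beta\lessgtr\alpha$ is the delicate computation.
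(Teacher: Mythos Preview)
Your proposal is correct and runs along the same rails as the paper: both derive from the covariance recursion the self-bounding convolution inequality
\[
\mathsf{f}_t \;\leqslant\; \tfrac{C_\beta}{2}\Big(\tfrac{1+\beta}{2\gamma t}\Big)^{1+\beta} \;+\; c\,\gamma^{1-\alpha}R_\alpha\sum_{k=0}^{t-1}\frac{\mathsf{f}_k}{(t-k)^{1+\alpha}},
\]
with the bias term produced by Assumption~\ref{asmp:radius} (via $x^{1+\beta}(1-\gamma x)^{2t}\leqslant((1+\beta)/(2\gamma t))^{1+\beta}$) and the convolution weights by Assumption~\ref{asmp:alpha}; and both reduce the endgame to the estimate $\sum_{k=1}^{T-1}k^{-(1+\beta)}(T-k)^{-(1+\alpha)}\lesssim T^{-(1+\alpha\wedge\beta)}$ (your step~(ii), the paper's lemma on $\mathsf{S}_T(\alpha,\beta)$). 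The only genuine methodological difference is how the recursion is \emph{closed}. You argue by strong induction on the ansatz $\mathsf{f}_t\leqslant\kappa\,\overline B_t$ and must verify that the convolution of $\overline B$ with the weights reproduces $\overline B$ up to a factor $<1$; the paper instead sums the recursion against the kernel $(T-t)^{-(1+\alpha)}$, obtains a single self-referential inequality for $G_T:=\sum_{t<T}\mathsf{f}_t/(T-t)^{1+\alpha}$, solves it in one line using the step-size condition, and plugs $G_T$ back. The paper's closure sidesteps the constant-tracking you flag as ``the delicate computation,'' and its proof of the convolution bound uses the identity $\tfrac{1}{t(T-t)}=\tfrac{1}{T}\big(\tfrac{1}{t}+\tfrac{1}{T-t}\big)$ followed by Young's inequality rather than your split at $t/2$---a bit slicker, though your splitting argument would also go through.
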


Let us comment this result. Its proof can be found in Appendix~\ref{App:thm_3}.

\myparagraph{Discussion on the limit cases for $\alpha$ and $\beta$.} Let us note a few observations on the two parameters $\alpha$ and $\beta$. Firstly, it is observed in~\cite{berthier2020tight} that the rate $1/T^{1+\alpha\wedge\beta}$ is optimal in terms of power law for SGD. Second, the convergence rate depends on the minimum of the two. It implies that either the regularity of the features, or the one of the optimum, is a bottleneck for the convergence of SGD. More precisely if $\alpha < \beta$, the features are not regular enough to counter the multiplicative noise of SGD. Conversely, when the optimum is the bottleneck, there is no difference between SGD and Gradient Descent (GD), as $T^{1+\beta}$ is the rate of convergence of GD. Note that $\alpha \to 0$ corresponds to the setting of Theorem~\ref{thm:no-assumption} where a $\log(T)$ appears. Hence, it is normal that our bound does not hold in this limit: indeed, we remark that the step-size shrinks to $0$ as $\xi_\alpha$ goes to infinity. Another interesting limit is the case where $\alpha \to 1$: as said before this cannot occur in infinite (or large) dimension as $R_\alpha \to $``$d$ ''. Therefore the bound blows up in this limit also. The fact that the step-size depends on $\alpha$ is a weakness of our result and is due to the mixing power of the covariance eigenspaces. This dependence could be eliminated with an extra assumption on a lower bound on the decrease rate of the spectrum of the covariance matrix as made by~\cite{caponnetto2007optimal}. Finally note that the rate of convergence is always strictly better than $1/T$ for $\beta > 0$ and is remarkably adaptive to the possible misspecification of the problem $\beta \in (-1,0)$.

\myparagraph{Comparison with the literature.} We can first compare to the results on the noisy setting studied by~\cite{dieuleveut2016}. Naturally, when some additive noise is assumed, averaging is necessary, and the results are weaker: $\gamma$ is not adaptive to the problem, depends on the time-horizon and the rates are always slower than $O(1/T)$. However, when the noise is $0$,  the averaged iterates achieves the exact same convergence rate as in our theorem. The closest results to our theorem are in~\cite{berthier2020tight}, where the same rates are shown. The important difference is that their results are given for the $\min$ of the function value and not the final iterate: our theorem solves an open problem stated in the aforementioned paper. Remark however that one superiority of~\cite{berthier2020tight} is that the step-size does not depend on $\alpha$ showing some adaptivity with this parameter. As noted in the previous paragraph, we could fix this difference with an extra assumption on the spectrum of the covariance matrix. Finally, rates of convergence for the noiseless setting have been addressed by~\cite{jun2019kernel} with some truncated version of the kernel ridge estimator. It was an open problem stated by the author to understand whether SGD could achieve these non-parametric rates. As for the $\min$, this problem seems to be properly solved now. A main difference is that the bounds of~\cite{jun2019kernel} suffer from some saturation in the well specified setting, when $\beta > 0$, whereas our estimator does not. When $\alpha-1 \leqslant \beta \leqslant  0$, then the bounds match, but when the problem is really misspecified, for $\beta \leqslant \alpha-1$,  the bound of~\cite{jun2019kernel} is strictly better than ours. As we know that our result is optimal for SGD, it raises the interesting question whether some form of acceleration~\citep{dieuleveut2017harder} or multiple passes over the data~\citep{pillaud2018statistical} could reach these rates. 

\myparagraph{Link with kernel regression.} These capacity and source conditions are classical in the reproducing kernel Hilbert spaces community. For sake of clearness, 
we did not mention any specificity on the features. However, as it has been done in~\cite{tarres2014online,dieuleveut2016}, SGD can be ``kernelized'' and the descent becomes an optimization algorithm in the RKHS space of functions. In this context, the \emph{capacity and source} conditions take on their full meaning: when RKHS are Sobolev spaces, they represent a smoothness condition on the features and the optimum respectively. The coefficient $\alpha$ would be the degree of regularity that we choose for the kernel, and $\beta$ the prior we have on the optimum solution. For more details, we refer to section 4 of~\cite{pillaud2018statistical} or section 3.1 of~\cite{berthier2020tight}.

\section{Development of the SGD recursions and proof outline}
In this section we provide an overview of the arguments that comprise the proof of our results.
\label{subsec:SGD_recursions}

\myparagraph{Recursions for multiplicative noise.} We can rewrite the SGD recursion Eq.~\eqref{eq:SGD} for the deviation to the optimum $\eta_t := \theta_t - \theta_*$ as a descent on the \textit{risk} plus a multiplicative noise term. For~$t \geqslant 1$,
\begin{align}
\label{eq:SGD_multiplicative}
    \eta_{t+1} = \left( I - \gamma x_t x_t^\top \right) \eta_t = \left( I - \gamma H \right)\eta_t + \gamma \left(H - x_t x_t^\top\right)\eta_t.
\end{align}

We can also write the recursion satisfied by the covariance of the iterates, $M_t := \E\left[\eta_t \eta_t^\top\right]$:
\begin{align}
\label{eq:SGD_covariance}
M_{t+1}&= \left( I - \gamma H \right)M_t\left( I - \gamma H \right) + \gamma^2 \E\left[\left(H - x_t x_t^\top\right)  M_t \left(H - x_t x_t^\top\right)\right].
\end{align}

Note a main difference between the two recursions, Eq.~\eqref{eq:SGD_covariance} is a deterministic recursion over operators whereas Eq.~\eqref{eq:SGD_multiplicative} is a stochastic recursion on vectors.

\myparagraph{Deviations sequence and initial conditions.} 
We  emphasis that $\eta_t$ and $M_t$ represent \emph{deviations} to the optimum $\theta_*$. This is why proving that $\theta_t$ goes to $\theta_*$ corresponds to $\eta_t$ and $M_t$ going to $0$. As the initial point of the SGD recursion is $\theta_{0} = 0$, the initial conditions $\eta_{t=0}$ and $M_{t=0}$ represent, in fact, respectively the optimum and the covariance at optimum.

\myparagraph{Eigenspaces of the covariance.} The core of the analysis rests on a reformulation of our problem in the eigenspaces of the covariance operator.
 Recall that $(\lambda_i, v_i)_i$ are the sorted eigenelements of $H$ and define the decomposition of $M_t$ in the basis of $v_i v_j^\top$, $M_t := \sum_{i} m^{t}_i v_i v_i^\top + \sum_{i\neq j} m_{ij}^t v_i v_j^\top.
$
%
We can write the expected risk of the estimator given by the $t$-th iterate of SGD, that we denote by $\fft$:
\begin{align} 
\label{eq:excess_risk_eigenvalues}
\fft := \E\mathcal{R}(\theta_t) = \frac{1}{2}\tr\left( M_t H\right) = \frac{1}{2} \sum_i \lambda_i m_{i}^t.
\end{align}
It is quite remarkable that 
the $(m_{ij})_{i \neq j}$ do not play any role in the recursions. 
The aim now is to write a recursion for $(m_{i}^t)_i$ that leads to a recursion for the function value $\fft$ through Eq.~\eqref{eq:excess_risk_eigenvalues}. This is the purpose of the following and central lemma, whose proof can be found in Appendix~\ref{App:recursion_covariance}.
\vspace{-.3cm}
\begin{lemma}[Recursion on the covariance of iterates]
\label{lem:recursion_covariance}
 Define $\fti{t}:= \Ex{ \scal{v_i}{X}^2 X^\top M_t X }$. \\ For $t\geqslant 1$,  we have the following recursion on the covariance of the iterates of SGD, $\forall i$,
\begin{align}
       m^{t+1}_{i} &= m^{t}_{i} - 2 \gamma \lambda_i m^{t}_{i} + \gamma^2 \fti{t} \label{eq:m_i_rec} , \\
       m_{i}^{t+1} &= \left(1- 2 \gamma \lambda_i\right)^t m_i^{0} + \gamma^2\sum_{k=0}^{t} \left(1 - 2 \gamma \lambda_i \right)^{t-k} \fti{k} \label{eq:m_i_t_rec},.
\end{align}
\end{lemma}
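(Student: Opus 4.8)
The plan is to start from the operator recursion~\eqref{eq:SGD_covariance} for $M_{t+1}$ and project it onto the diagonal directions $v_i v_i^\top$ of the eigenbasis of $H$. Concretely, I would take the bilinear form $\langle v_i, \cdot\, v_i\rangle$ on both sides; by definition $m_i^t = \langle v_i, M_t v_i\rangle$. The first term $(I-\gamma H)M_t(I-\gamma H)$ contributes $\langle v_i,(I-\gamma H)M_t(I-\gamma H)v_i\rangle = (1-\gamma\lambda_i)^2 m_i^t$, using $Hv_i=\lambda_i v_i$. The second term is $\gamma^2\,\langle v_i, \E[(H-x_tx_t^\top)M_t(H-x_tx_t^\top)] v_i\rangle$. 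Expanding the product inside the expectation gives four pieces: $HM_tH$, $-HM_t x_tx_t^\top$, $-x_tx_t^\top M_t H$, and $x_tx_t^\top M_t x_tx_t^\top$. Taking $\E$ and using $\E[x_tx_t^\top]=H$, the first three combine to $-HM_tH$ (i.e.\ $HM_tH - HM_tH - HM_tH = -HM_tH$), and the last one is $\E[\langle x_t, M_t x_t\rangle\, x_tx_t^\top]$. Sandwiching with $v_i$: the $-HM_tH$ piece yields $-\lambda_i^2 m_i^t$, and the quartic piece yields exactly $\E[\langle x_t, M_t x_t\rangle \langle v_i, x_t\rangle^2] = \fti{t}$ by the definition given in the lemma.

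Collecting terms: $m_i^{t+1} = (1-\gamma\lambda_i)^2 m_i^t + \gamma^2(-\lambda_i^2 m_i^t + \fti{t}) = (1 - 2\gamma\lambda_i + \gamma^2\lambda_i^2 - \gamma^2\lambda_i^2) m_i^t + \gamma^2 \fti{t} = (1-2\gamma\lambda_i) m_i^t + \gamma^2 \fti{t}$, which is~\eqref{eq:m_i_rec}. The cancellation of the $\gamma^2\lambda_i^2$ terms is the one small thing worth flagging — it is the reason the recursion closes cleanly with a $(1-2\gamma\lambda_i)$ factor rather than $(1-\gamma\lambda_i)^2$ — but it is a routine algebraic identity, not a genuine obstacle. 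Then~\eqref{eq:m_i_t_rec} follows from~\eqref{eq:m_i_rec} by unrolling the linear first-order recursion: iterating $m_i^{t+1} = a_i m_i^t + \gamma^2 \fti{t}$ with $a_i := 1-2\gamma\lambda_i$ gives $m_i^{t+1} = a_i^{t+1} m_i^0 + \gamma^2\sum_{k=0}^{t} a_i^{t-k}\fti{k}$; one should double-check the exponent on the $m_i^0$ term matches the indexing convention in the statement (the statement writes $(1-2\gamma\lambda_i)^t m_i^0$, consistent with starting the recursion at $t\geqslant 1$ from $m_i^1$, so I would verify the base case and shift carefully).

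The only point needing a little care, and the closest thing to a genuine obstacle, is justifying that the off-diagonal coefficients $m_{ij}^t$ genuinely do not enter the recursion for $m_i^t$ — i.e.\ that sandwiching $\E[\langle x_t, M_t x_t\rangle\, x_tx_t^\top]$ with $v_i$ does not pull in the $m_{ij}^t$ terms. Here one writes $M_t = \sum_j m_j^t v_jv_j^\top + \sum_{j\neq k} m_{jk}^t v_jv_k^\top$ and observes that $\langle x_t, M_t x_t\rangle \langle v_i, x_t\rangle^2 = \big(\sum_j m_j^t \langle v_j,x_t\rangle^2 + \sum_{j\neq k} m_{jk}^t \langle v_j,x_t\rangle\langle v_k,x_t\rangle\big)\langle v_i,x_t\rangle^2$; this does formally depend on the $m_{jk}^t$ pointwise, but the claim as stated in the lemma keeps $\fti{t}$ as an abstract quantity $\E[\langle v_i,X\rangle^2 X^\top M_t X]$ rather than expanding it, so the recursions~\eqref{eq:m_i_rec}--\eqref{eq:m_i_t_rec} are exactly as claimed without needing the off-diagonal terms to vanish. (The stronger structural claim that the $m_{ij}$ "do not play any role" is used later, via the fact that $\fft$ depends only on the $m_i^t$ through~\eqref{eq:excess_risk_eigenvalues}; it is not needed for this lemma.) I would therefore present the proof as: (1) project~\eqref{eq:SGD_covariance} onto $v_iv_i^\top$, (2) simplify the four expectation terms using $\E[x_tx_t^\top]=H$ and $Hv_i=\lambda_iv_i$, (3) observe the $\gamma^2\lambda_i^2$ cancellation to obtain~\eqref{eq:m_i_rec}, (4) unroll to get~\eqref{eq:m_i_t_rec}.
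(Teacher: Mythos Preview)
Your proposal is correct and follows essentially the same approach as the paper: expand the noise term in~\eqref{eq:SGD_covariance} using $\E[x_t x_t^\top]=H$ to obtain $\E[x_t^\top M_t x_t\, x_t x_t^\top]-HM_tH$, sandwich with $v_i$ using $Hv_i=\lambda_i v_i$, and then unroll the resulting first-order recursion (the paper does this by induction). The only cosmetic difference is that the paper simplifies to $M_{t+1}=M_t-\gamma HM_t-\gamma M_tH+\gamma^2\E[x_t^\top M_t x_t\, x_t x_t^\top]$ at the operator level \emph{before} projecting, so the $\gamma^2\lambda_i^2$ cancellation you flag happens implicitly there; your caution about the exponent in~\eqref{eq:m_i_t_rec} is also well-placed, as the paper itself uses the shifted form $m_i^t=(1-2\gamma\lambda_i)^t m_i^0+\gamma^2\sum_{k=0}^{t-1}(1-2\gamma\lambda_i)^{t-k-1}\fti{k}$ in the subsequent lemmas.
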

Let us make comments on this important lemma. First, note that the only difference with the deterministic recursion (gradient descent) is the presence of the ``mixing terms'' $(\fti{t})_i$. In fact, these terms affect dramatically the dynamics. Indeed, there is no reason that the iterates decrease along the iterations: this is what makes the analysis more tedious for the final iterate. On the contrary, the usual convergence rate for the \emph{averaged iterates}  is easily obtained  by summing Eq.~\eqref{eq:m_i_rec} and comparing the terms. Note finally that the strength of Eq.~\eqref{eq:m_i_rec} lies in the fact that the different eigenspaces (\emph{e.g.} $m_i^t$ and $m_j^t$ for $i\neq j$) interact only through $\fti{t}$ and $\ftj{t}$. By summing Eq.~\eqref{eq:m_i_t_rec}, and appropriately controlling sums of $\fti{t}$'s with Assumptions \ref{asmp:forth_moment}, \ref{asmp:logH} or \ref{asmp:alpha}, we can get recursive inequalities on $\mathsf{f}_t$. These show that $\mathsf{f}_t$ is in fact a Lyapunov function ; going  further, we use discrete versions of Gronwall-type inequalities to finally upper-bound it. To exemplify this reasoning, we present, in the following lemma, a Lyapunov control on $\mathsf{f}_t$ using Assumptions \ref{asmp:forth_moment}. Its proof can be found in Appendix~\ref{App:rec_f}.
\begin{lemma}[Recursion on $\mathsf{f}_t$]
\label{lem:rec_f} 
Under Assumption~\ref{asmp:forth_moment}, assume $\gamma \leqslant \left(4 \lambda_{\max}\right)^{-1} $. For $t \geqslant 1$ we have the following recursion on the function value $\mathsf{f}_t$. For all $i$,
\begin{align}
\mathsf{f}_{t}  &\leqslant \frac{\tr{\left(M_0\right)}}{4\gamma t} + \gamma~R~\sum_{k=0}^{t-1} \frac{\mathsf{f}_{k}}{t-k}
\end{align}
\end{lemma}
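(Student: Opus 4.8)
The plan is to combine the closed-form recursion \eqref{eq:m_i_t_rec} with the identity $\mathsf{f}_t = \tfrac12\sum_i\lambda_i m_i^t$ from \eqref{eq:excess_risk_eigenvalues} and bound the resulting ``bias'' and ``variance'' pieces separately. Concretely, unrolling the one-step recursion \eqref{eq:m_i_rec} from step $0$ to step $t-1$ gives $m_i^{t} = (1-2\gamma\lambda_i)^{t}m_i^0 + \gamma^2\sum_{k=0}^{t-1}(1-2\gamma\lambda_i)^{t-1-k}\fti{k}$, so that
\begin{align*}
\mathsf{f}_t = \underbrace{\tfrac12\sum_i\lambda_i(1-2\gamma\lambda_i)^{t}m_i^0}_{\text{bias}} + \underbrace{\tfrac{\gamma^2}{2}\sum_{k=0}^{t-1}\sum_i\lambda_i(1-2\gamma\lambda_i)^{t-1-k}\fti{k}}_{\text{variance}}.
\end{align*}
The assumption $\gamma\le(4\lambda_{\max})^{-1}$ ensures $0\le 1-2\gamma\lambda_i\le 1$ for every $i$, hence all the powers above are nonnegative; this is what makes the termwise estimates below legitimate.

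For the bias term I would use the elementary scalar inequality $\max_{0\le x\le 1/(2\gamma)} x(1-2\gamma x)^{n}\le\frac{1}{2\gamma(n+1)}$ (the maximum of $x\mapsto x(1-2\gamma x)^n$ is attained at $x=\frac{1}{2\gamma(n+1)}$, where its value is $\frac{1}{2\gamma(n+1)}\bigl(\tfrac{n}{n+1}\bigr)^n\le\frac{1}{2\gamma(n+1)}$). Applying it with $x=\lambda_i$ and $n=t$ yields $\lambda_i(1-2\gamma\lambda_i)^{t}\le\frac{1}{2\gamma(t+1)}\le\frac{1}{2\gamma t}$, so the bias term is at most $\frac{1}{4\gamma t}\sum_i m_i^0=\frac{\tr(M_0)}{4\gamma t}$.

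For the variance term, I would fix $k$, set $n=t-1-k$, and first turn the mixing terms into an operator quadratic form: since $\fti{k}=\Ex{\scal{v_i}{X}^2 X^\top M_k X}$ and $\sum_i\lambda_i(1-2\gamma\lambda_i)^n\scal{v_i}{X}^2 = X^\top H(I-2\gamma H)^n X$, one gets
\begin{align*}
\sum_i\lambda_i(1-2\gamma\lambda_i)^n\fti{k} = \Ex{\bigl(X^\top H(I-2\gamma H)^n X\bigr)\,\bigl(X^\top M_k X\bigr)}.
\end{align*}
The same elementary bound applied to the eigenvalues of $H$ gives $H(I-2\gamma H)^n\preccurlyeq\frac{1}{2\gamma(n+1)}I$, and $M_k\succcurlyeq0$, so pointwise $\bigl(X^\top H(I-2\gamma H)^n X\bigr)\bigl(X^\top M_k X\bigr)\le\frac{1}{2\gamma(n+1)}\norm{X}^2 X^\top M_k X$; taking expectations, the right-hand side becomes $\frac{1}{2\gamma(n+1)}\tr\!\bigl(\Ex{\norm{X}^2 XX^\top}M_k\bigr)$. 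Now Assumption~\ref{asmp:forth_moment}, namely $\Ex{\norm{X}^2 XX^\top}\preccurlyeq RH$, together with $M_k\succcurlyeq0$, gives $\tr\!\bigl(\Ex{\norm{X}^2 XX^\top}M_k\bigr)\le R\,\tr(HM_k)=2R\,\mathsf{f}_k$. Hence $\sum_i\lambda_i(1-2\gamma\lambda_i)^n\fti{k}\le\frac{R\,\mathsf{f}_k}{\gamma(n+1)}=\frac{R\,\mathsf{f}_k}{\gamma(t-k)}$, and summing over $k$ the variance term is at most $\frac{\gamma^2}{2}\sum_{k=0}^{t-1}\frac{R\,\mathsf{f}_k}{\gamma(t-k)}=\frac{\gamma R}{2}\sum_{k=0}^{t-1}\frac{\mathsf{f}_k}{t-k}\le\gamma R\sum_{k=0}^{t-1}\frac{\mathsf{f}_k}{t-k}$. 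Adding the two bounds yields the claim.

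The genuinely non-routine step here is the reformulation of $\sum_i\lambda_i(1-2\gamma\lambda_i)^n\fti{k}$ as the expectation of the product of the two quadratic forms $X^\top H(I-2\gamma H)^n X$ and $X^\top M_k X$, followed by invoking the fourth-moment Assumption~\ref{asmp:forth_moment} in the correct positive-semidefinite-ordering fashion (and being careful that $M_k\succcurlyeq0$ so that the pointwise product inequality survives taking expectations). Everything else reduces to the single elementary estimate on $x(1-2\gamma x)^n$ and to keeping track of the index shift between \eqref{eq:m_i_rec}, \eqref{eq:m_i_t_rec} and the summation range, which is where one must be attentive to land exactly on the stated constants.
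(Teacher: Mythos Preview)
Your proof is correct and follows essentially the same route as the paper: unroll \eqref{eq:m_i_t_rec}, split into bias and variance, bound $\lambda_i(1-2\gamma\lambda_i)^n\le\frac{1}{2\gamma(n+1)}$ for each piece, and invoke Assumption~\ref{asmp:forth_moment} via the trace inequality $\tr(\Ex{\norm{X}^2 XX^\top}M_k)\le 2R\,\mathsf f_k$. The only cosmetic differences are that the paper uses the cruder estimate $x(1-x)^t\le xe^{-xt}\le 1/t$ (after first absorbing a factor $(1-2\gamma\lambda_i)^{-1}\le 2$ to shift the exponent from $t-1-k$ to $t-k$), and it bounds the $\lambda_i$-weight \emph{before} summing over $i$ (so it computes $\sum_i\fti{k}$ directly rather than your operator reformulation $\Ex{(X^\top H(I-2\gamma H)^n X)(X^\top M_k X)}$); your sharper elementary maximum even yields $\tfrac{\gamma R}{2}$ for the variance constant before you relax it to $\gamma R$.
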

The decrease of the function value $f_t$ is controlled by the sum of a bias term --characterizing how fast the initial conditions $m^0_i$ are forgotten--, and a variance term --characterizing how the noise reverberates through the iterates. All the theorems are proven using the same technique: the aim is to use the inequality recursively to control the variance term. The different Assumptions \ref{asmp:forth_moment}, \ref{asmp:logH}, \ref{asmp:alpha} lead to different variance term. Trying to factorize the proofs in one general bound does not allow for a clear presentation of the results. This is why, despite the apparent redundancy of the proof technique, for the sake of clarity and easy reading, we preferred to split the different proofs of the theorem and factorize only certain technical lemmas. All the proofs of  Theorems~\ref{thm:no-assumption}, \ref{thm:nolog}, \ref{thm:capacity_source} can be found in Appendix~\ref{App:main_results}, respectively in Sections~\ref{App:thm_1}, \ref{App:thm_2}, \ref{App:thm_3}.

\section{Experiments}
\label{section:exp}
\begin{figure}[t]
\centering
\begin{minipage}[c]{.5\linewidth}
\includegraphics[width=\linewidth]{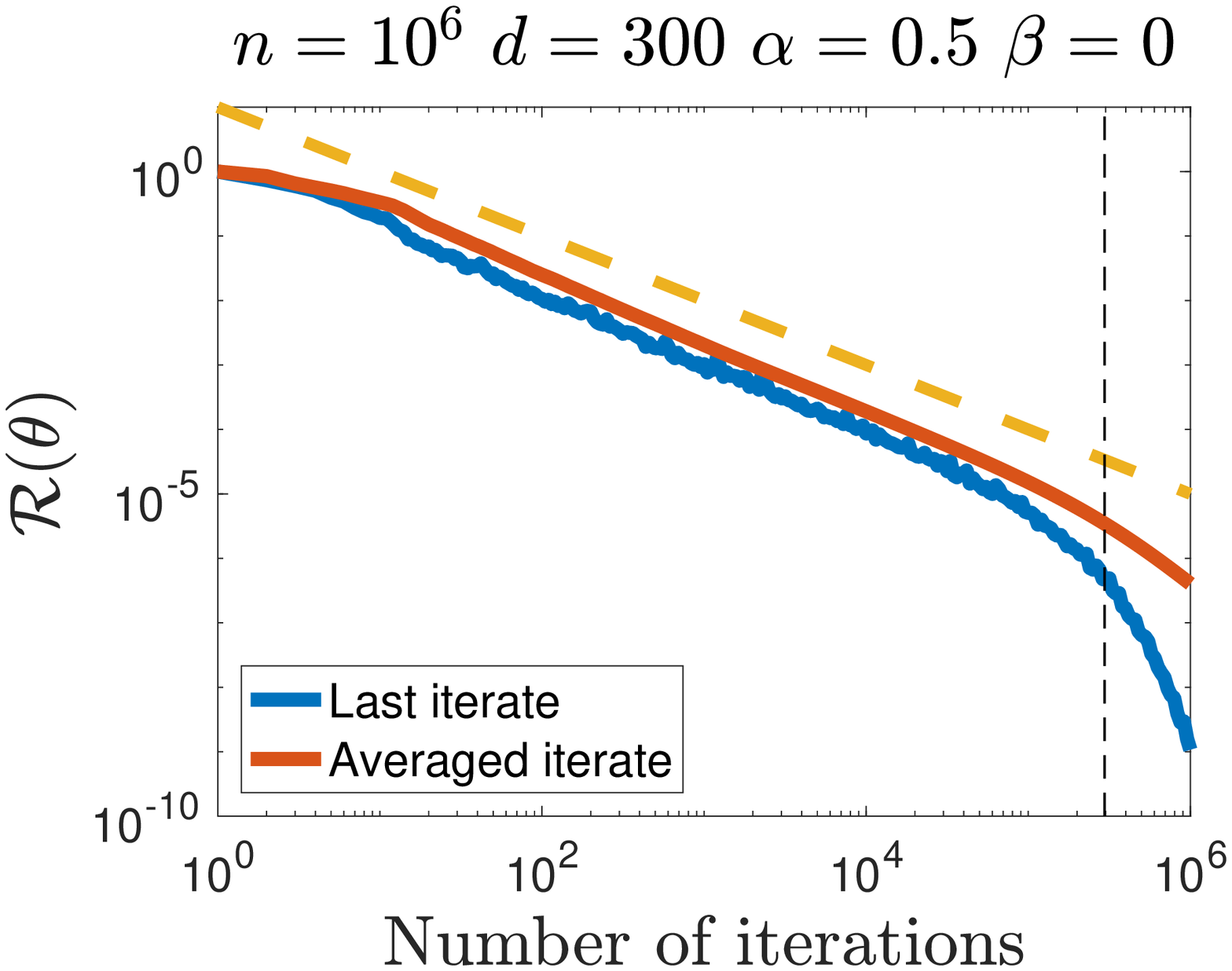}
   \end{minipage}
   \hspace*{-10pt}
   \begin{minipage}[c]{.5\linewidth}
\includegraphics[width=\linewidth]{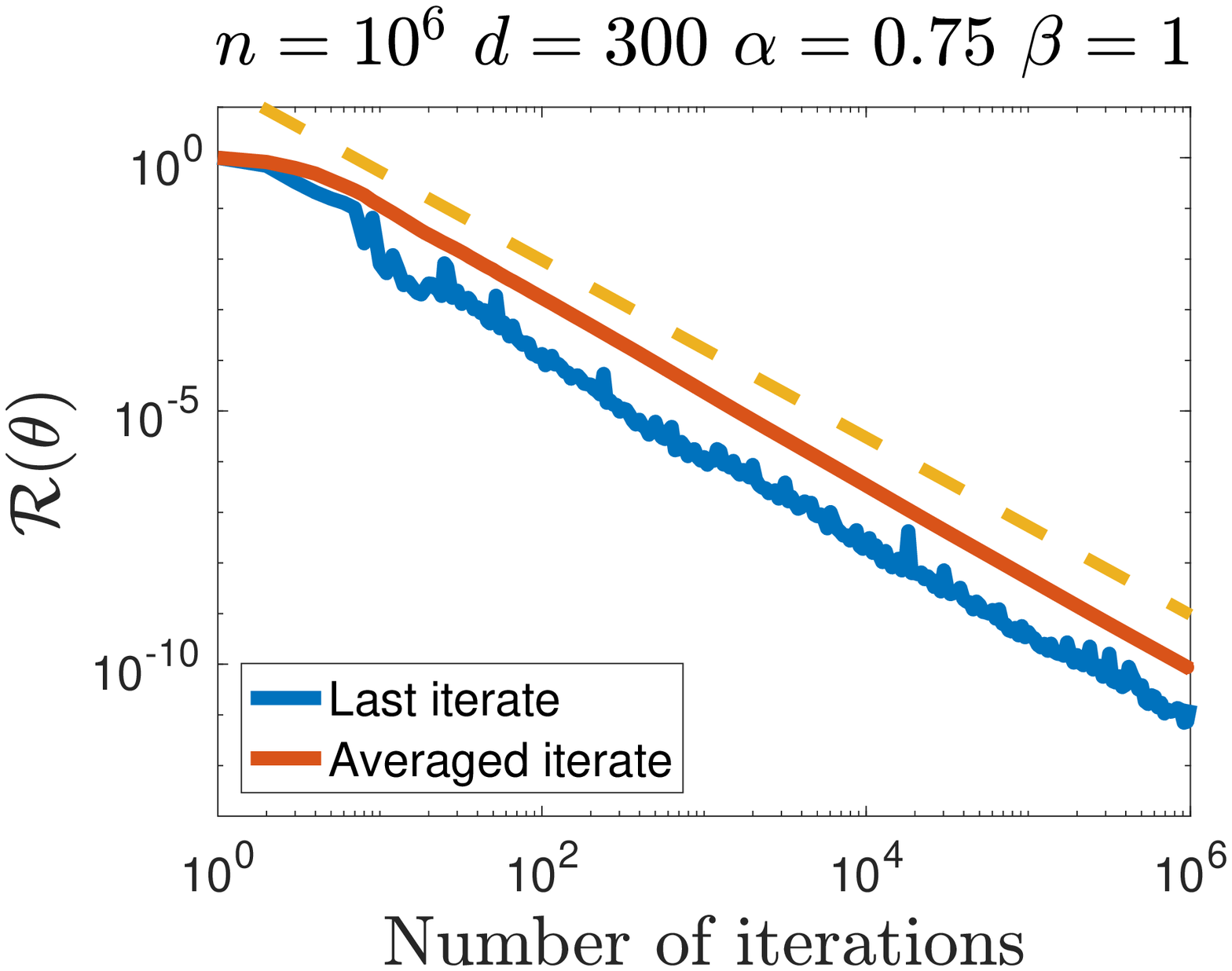}
   \end{minipage}
  \caption{Least-squares regression. Left: $\alpha = 0.5$ and $\beta = 0$. The vertical dashed line marks the transition to the linear convergence regime. Right: $\alpha =0.75$ and $\beta = 1$. The orange dashed line represents the curve $1/T^{1+\alpha\wedge\beta}$ predicted by Theorem~\ref{thm:capacity_source}.}
     \label{fig:synthetic}
\end{figure}
We illustrate our theoretical results on a synthetic least-squares problem using the SGD algorithm defined in Eq.~\eqref{eq:SGD}. For $d=300$ we consider a stream of normally distributed inputs $(x_n)_n$ whose covariance matrix $H$ has random eigenvectors $v_i$ and eigenvalues $1/i^{1/(1-\alpha)}$ for $i=1,\dots,d$. The optimum is chosen randomly: $\theta_*= \sum 1/i^{\frac{1-\beta/(1-\alpha)}{2}} v_i $. This allows to reproduce the setting where the coefficient $\alpha$ and $\beta$ of the capacity and source conditions are perfectly controlled. The outputs $(y_n)_n$ are generated through $y_n=\langle \theta_*,x_n\rangle$. We take a step-size $\gamma = \frac{1}{2\tr{H}}$. All results are averaged over ten repetitions.
We compare the performance of the last iterate and the averaged iterate of SGD on two different problems: one corresponding to  $\alpha = 0.5$ and $\beta = 0$ and the other to $\alpha =0.75$ and $\beta = 1$. For those problems, the predicted convergence is given by Theorem~\ref{thm:capacity_source}: $O(1/T^{1+\alpha\wedge\beta})$. Hence we expect a $1/T$ convergence in the first example and a fast $1/T^{1.75}$ rates for the second. 

First note that the bound of Theorem~\ref{thm:capacity_source} is perfectly matched by these two examples. Second, we can see that, the averaged SGD and the final iterate show the same behavior quite accordingly to the theory (see comments on Theorem~\ref{thm:capacity_source}) even if we can notice a slight -but real- better performance (result are shown in $\log$-scale) from the final iterate. However, the main difference is that, as the averaged iterates show some saturation, the final iterate meets a point where it changes to a linear convergence regime (left plot in Figure~\ref{fig:synthetic}). This adaptivity of the final iterate appearing after time scale $\tau \sim 1/(\gamma \lambda_{\min})$ represents a true asset in choosing the final iterate versus the averaged one. Finally notice that the linear rate regime cannot be seen in the right plot as the maximum number of iterations shown, $n = 10^6$, is negligible when compared to $ \tau \sim d^{1/(1-\alpha)} = 300^{1/(1-0.75)} \sim 10^{10}$.

\section{Conclusion and Perspectives}

In this paper, we proceed to a detailed analysis of the convergence rates of SGD in the noiseless least-squares setting. We derive a systematic study of the SGD last iterate that leverages a sequence of fine-grained assumptions allowing us to exhibit fast polynomial rates. The absence of additive noise changes dramatically the behavior of SGD: no decaying step sizes or averaging are needed for convergence as constant step-size SGD naturally adapts to the problem. The development of the SGD recursion as an interacting particle systems open many perspectives: can it be used to analyze SGD accelerations? multiple passes over the data? Another significant result is that, from an optimization perspective, we give a prototypal example where the last iterate of constant step-size SGD provably converges in the non-strongly convex case. This raises the following fundamental question: can we extend our method to show the convergence of SGD last iterate in the general non-strongly convex case?

\clearpage

\bibliographystyle{abbrvnat}
\bibliography{neurips_2021}

\clearpage

\appendix

\section*{\Large Appendix}

\subsection*{Organization of the Appendix}

In Appendix~\ref{App:summary} we present a detailed summary of our results and of the different assumptions. In Appendix~\ref{App:SGD_recursions} we prove Lemma~\ref{lem:recursion_covariance} and \ref{lem:rec_f} which enable to obtain the main SGD recursions. Then in Appendix~\ref{App:main_results} we prove our main results: Theorems~\ref{thm:no-assumption}, \ref{thm:nolog} and \ref{thm:capacity_source}.

\section{Detailed summary of the results and of the different assumptions}
\label{App:summary}

\subsection{Capacity and Source Conditions and Results}
\myparagraph{Summary on the assumptions.} All the assumptions we discussed in the article are summarized in Tables~\ref{tab:features} and \ref{tab:optimum}. It clearly shows that the assumptions go stricter and stricter revealing a finer description of the problem. Note also that they can be paired two by two, one sequence corresponding to the features and the other one to the optimum: Assumption~\ref{asmp:forth_moment} with \ref{asmp:attainable_case}, Assumption~\ref{asmp:logH} with \ref{asmp:logM} and Assumption~\ref{asmp:alpha} with \ref{asmp:radius}. The case $\beta \in (-1,0)$, when the optimum is non-attainable, is left aside in the panels for the sake of clarity.

\begin{table}[ht]
\centering
        \begin{tabular}{ || c | l    c ||}
        \hspace*{0.25cm}\textbf{Assumption}\hspace*{0.25cm} & \textbf{Name} & \textbf{Condition}  \\ \hline \hline
           Assumption~\textbf{\ref{asmp:forth_moment}} & Fourth Moment Condition & $ \Ex{ \norm{X}^2~XX^{\top}} \preccurlyeq R H $  \\ 
           \textbf{$\Uparrow$} &  & \textbf{$\hspace*{1.5cm}\Uparrow$}  \\ 
           Assumption~\textbf{\ref{asmp:logH}} & Features $\mathrm{log}$-regularity \hspace*{0cm} & $\Ex{ \scal{X}{ \ln{(H^{-1})} X}~XX^{\top}} \preccurlyeq \rln H $ \\ 
           \textbf{$\Uparrow$} &  & \textbf{$\hspace*{1.5cm}\Uparrow$}   \\ 
           Assumption~\textbf{\ref{asmp:alpha}} & Capacity condition & $ \Ex{ \norm{H^{-\alpha/2}X}^2~XX^{\top}} \preccurlyeq R_{\alpha} H , \alpha \in \left(0,1\right) $ \\ \hline
        \end{tabular}
        \vspace{2pt}
        \caption{Table showing conditions and implications for each Assumption on the features.}
        \label{tab:features}
\end{table}

\begin{table}[ht]
        \centering
        \begin{tabular}{ || c | l  l  c ||}
        \hspace*{0.25cm}\textbf{Assumption}\hspace*{0.25cm}  & \textbf{Name} & \textbf{Condition} & \textbf{Finite constant}  \\ \hline \hline
           Assumption~\textbf{\ref{asmp:attainable_case}} & Attainable case & $\displaystyle \|\theta_*\|_\h < +\infty$  & $\|\theta_*\|_\h$  \\ 
           \textbf{$\Uparrow$} &  & \textbf{$\hspace*{1.5cm}\Uparrow$}  & \\ 
           Assumption~\textbf{\ref{asmp:logM}} & Optimum $\mathrm{log}$-regularity\hspace*{0.1cm}  & $\tr(M_0 \ln(H^{-1})) < + \infty $ & $C_\mathrm{ln}$ \\ 
           \textbf{$\Uparrow$} &  & \textbf{$\hspace*{1.5cm}\Uparrow$}  & \\ 
           Assumption~\textbf{\ref{asmp:radius}} & Source condition & $ \tr( M_0 H^{-\beta}) < + \infty, \ \beta > 0$ & $C_\beta$  \\ \hline
        \end{tabular}
        \vspace{2pt}
        \caption{Table showing conditions and implications for each Assumption on the optimum.}
        \label{tab:optimum}
\end{table}

\myparagraph{Summary on the results.} Each pair of assumptions (on the features and on the optimum) corresponds to a given rate of convergence. This is what is summarized in the Table~\ref{tab:results}. Stricter assumptions naturally lead to enhanced convergence rates or bigger step-sizes. Going from a theorem to the next one, the only difference is that we strengthen the assumptions to improve either the convergence rate or the step-size. Every theorem is a bound on the \emph{expected risk} given by the \emph{last iterate} of the SGD recursion with \emph{constant} step size $\gamma$: $\E \mathcal{R}(\theta_T) =  \frac{1}{2}\E\,  \tr\left[(\theta_T-\theta_*)\left(\theta_T-\theta_*\right)^\top H\right]$, where~$\E$ stands for the expectation with respect to the stream of data used by the SGD algorithm until time~$T$.
Only for the Theorem \ref{thm:no-assumption}, for which the assumptions are the weakest, we allow the step-size to depends on the finite horizon $T$ (through its logarithm). All the theorems are stated with respect to finite constants defined thanks to the different assumptions. The reader can refer to Tables~\ref{tab:features} and \ref{tab:optimum} for a concise summary of them. All the proofs have been written in Appendix~\ref{App:main_results}.

\begin{table}[ht]
        \centering
        \begin{tabular}{ || c | c  c ||}
        \textbf{Assumptions $\rightarrow$ Theorem} & \hspace*{0.75cm} \textbf{Rate} \hspace*{0.75cm} & \textbf{Step-size}  \\ \hline \hline

           Assumption~\textbf{\ref{asmp:forth_moment}},~\textbf{\ref{asmp:attainable_case}} $\rightarrow$ Theorem~\textbf{\ref{thm:no-assumption}} & $\displaystyle \ln(T)/T$  & $O(1/\ln(T))$  \\

           \textbf{$\Uparrow$} &  & \\ 

           Assumption~\textbf{\ref{asmp:logH}},~\textbf{\ref{asmp:logM}} $\rightarrow$ Theorem~\textbf{\ref{thm:nolog}}  & $\displaystyle 1/T$ & $O(1)$  \\ 

           \textbf{$\Uparrow$} &  & \\ 

           Assumption~\textbf{\ref{asmp:alpha}},~\textbf{\ref{asmp:radius}} $\rightarrow$ Theorem~\textbf{\ref{thm:capacity_source}}  & $\displaystyle 1/T^{1+\alpha}$  & $O(1)$  \\ \hline
        \end{tabular}
        \vspace{4pt}
        \caption{Table showing different upper-bounds for the convergence of the SGD final iterate given different assumptions.}
        \label{tab:results}
\end{table}

\paragraph{Details on $\log$-regularity.}\label{App:asmp-log}  We have stated Assumption~\ref{asmp:logH} with a reference $\lo$ such that $ \lo \geq 7 \rln $ for a better presentation of the results. Here we try to give a more detailed description on the choice of $\lambda_0$. Lets make an assumption without any dependence on $\lo$: there exists a constant $\tirln \geq 0$ such that:

\begin{align}
\label{eq:tilde_logH}
\Ex{ \scal{X}{\ln{\left(H^{-1}\right)}X} XX^\top } \preccurlyeq \tirln H.  
\end{align}

Note that this assumption is stricter than Assumption~\ref{asmp:forth_moment} and hence there exists $\Tilde{R} > 0$ such that $\Ex{ \nor{X}^2 XX^\top } \preccurlyeq \Tilde{R} H$. Thus,
\begin{align*}
\Ex{ \scal{X}{\ln{\left( \lo H^{-1}\right)}X} XX^\top }  &= \Ex{ \scal{X}{\ln{\left(H^{-1}\right)}X} XX^\top }  + \ln{\lo}\  \Ex{ \nor{X}^2 XX^\top }  \\ 
&\preccurlyeq \left( \Tilde{R} \ln{\lo} + \tirln \right) H.
\end{align*}
Hence with $ \rln = \Tilde{R} \ln{\lo} + \tirln $, it satisfies Assumption~\ref{asmp:logH}. Now coming to the condition, we need
$$ 7\left( \Tilde{R} \ln{\lo} + \tirln \right) \leq \lo. $$
Without loss of generality assume $\tirln \geq \Tilde{R}$, $\ln{\left(\ln{\tirln}\right)} \leq \ln{\tirln} $ and $\ln{\tirln} \geq 1$. Let $\lambda_{o} = 50 \tirln \ln{\tirln} $, then 
\begin{align*}
7\left( \Tilde{R} \ln{\lo} + \tirln \right)  &\leqslant 7 \left( \tirln \ln{\left( 50 \tirln \ln{\tirln} \right)} \right) + 7 \tirln \\
&\leq 7 \ln{50} \tirln + 7 \tirln \ln{\tirln} + 7 \tirln \ln{\left(\ln{\tirln}\right)} + 7 \tirln \\
&\leq ( 7 \ln{50} + 21 ) \tirln \ln{\tirln} \leq 50 \tirln \ln{\tirln} = \lo.
\end{align*}

which satisfies the required condition. Hence we can always find such a $\lo$ if $\tirln$ is finite.

\paragraph{Capacity condition and eigenvalue decay.}\label{App:eigen-decay} In our comments on Assumptions \ref{asmp:forth_moment},\ref{asmp:logH},\ref{asmp:alpha}, we make a few observations about the connection between the capacity conditions and the eigenvalue decay. Here, we give a detailed description between them. For example, let us assume Assumption~\ref{asmp:alpha},
\begin{align*}
    \Ex{ \scal{X}{H^{-\alpha}X} XX^{\top} } &\preccurlyeq R_\alpha H, \\ 
    \tr{\left( \Ex{ \scal{X}{H^{-\alpha}X} XX^{\top}H^{-\alpha} } \right)} &\leqslant \tr{\left( R_\alpha H^{1-\alpha}  \right)}, \qquad \textrm{ as } H \textrm{ is PSD} \\
    \Ex{ \scal{X}{H^{-\alpha}X} \tr{(XX^{\top}H^{-\alpha}}) } &\leqslant R_\alpha \tr{\left(H^{1-\alpha} \right)}, \qquad \,  \textrm{ using } \tr{(XX^{\top}H^{-\alpha})}  = \scal{X}{H^{-\alpha}X}, \\
    \Ex{ \scal{X}{H^{-\alpha}X}^2 } &\leq R_\alpha \tr{\left(H^{1-\alpha} \right)}.
\end{align*}
Now using Cauchy-Schwarz we have,
\begin{align*}
   \left(\tr{H^{1-\alpha}}\right)^2 = \tr{ \left( \Ex{ XX^\top} H^{-\alpha} \right) }^2 =  \Ex{ \scal{X}{H^{-\alpha}X} }^2 &\leq \Ex{ \scal{X}{H^{-\alpha}X}^2 } \leq  R_\alpha \tr{\left(H^{1-\alpha} \right)}.
\end{align*}
So from Assumption \ref{asmp:alpha} we have $\tr{H^{1-\alpha}} \leqslant R_{\alpha}$. Using the fact that the eigen values are enumerated in sorted non-increasing order, we have 
\[ i \lambda_i^{1-\alpha} \leq \sum_{j \leqslant i} \lambda_j^{1-\alpha}\leqslant \tr{H^{1-\alpha}} \leqslant R_{\alpha} \]
which gives that $\lambda_i$ is of order $O(1/i^{\frac{1}{1-\alpha}})$.

Similarly, if we do similar analysis for Assumption~\ref{asmp:logH}, it implies that $\tr{\left( H\ln(\lambda_0 H^{-1})\right)} \leq \rln $ and hence, $\lambda_i$ is of order $O(\frac{1}{i \ln i})$.

\section{Proofs of the SGD recursions: Lemma~\ref{lem:recursion_covariance} and \ref{lem:rec_f}}
\label{App:SGD_recursions}

\subsection{ Proof of Lemma~\ref{lem:recursion_covariance} }
\label{App:recursion_covariance}
First recall that for $t \geqslant 1$, Eq.~\eqref{eq:SGD_covariance} gives that
\begin{align*}
M_{t+1}&= \left( I - \gamma H \right)M_t\left( I - \gamma H \right) + \gamma^2 \E\left[\left(H - x_t x_t^\top\right)  M_t \left(H - x_t x_t^\top\right)\right].
\end{align*}

Lets compute $\E\left[\left(H - x_t x_t^\top\right)  M_t \left(H - x_t x_t^\top\right)\right]$. Using that $\E[x_tx_t^{\top}] = H$, we first write, 
\begin{align*}
\E\left[\left(H - x_t x_t^\top\right)  M_t \left(H - x_t x_t^\top\right)\right] &= \Ex{ H M_t H  } - \Ex{ x_t x_t^\top M_t H } - \Ex{ H M_t x_t ~ x_t^\top } + \Ex{ x_t x_t^\top M_t ~ x_t x_t^\top } \\
&=  H M_t H  -  H M_t H  -  H M_t H + \Ex{ x_t^\top M_t x_t~~x_tx_t^\top }  \\ &= \Ex{ x_t^\top M_t x_t~x_tx_t^\top }- H M_t H. 
\end{align*}

Now using this in Eq.~\eqref{eq:SGD_covariance} we have
\begin{align*}
    M_{t+1}&= \left( I - \gamma H \right)M_t\left( I - \gamma H \right) + \gamma^2 \left[ \Ex{ x_t^\top M_t x_t ~ x_tx_t^\top }-H M_t H \right] \\
    &= M_t - \gamma H M_t - \gamma M_t H + \gamma^2 \Ex{ x_t^\top M_t x_t ~ x_tx_t^\top }.
\end{align*}
Now we project the above term on $v_i v_i^{\top}$
\begin{align*}
v_i^\top M_{t+1} v_i &= v_i^\top M_t v_i - \gamma v_i^\top H M_t v_i - \gamma v_i^\top M_t H v_i + \gamma^2 v_i^\top \Ex{ x_t^\top M_t x_t ~x_tx_t^\top } v_i \\
&= v_i^\top M_t v_i - 2 \gamma \lambda_i v_i^\top M_t v_i + \gamma^2 \Ex{ \scal{v_i}{x_t}^2 x_t^\top M_t x_t }. \qquad \textrm{Using } H v_i = \lambda_i v_i .
\end{align*}
Hence, using the notation $\fti{t} ~=~ \Ex{ \scal{v_i}{x_t}^2 x_t^\top M_t x_t } $, and recalling that $v_i^\top M_t v_i^\top = m^{t}_{i}$, we have the following recursion,
\begin{align*}
    m^{t+1}_{i} &= m^{t}_{i} - 2 \gamma \lambda_i m^{t}_{i} + \gamma^2 \Ex{ \scal{v_i}{x_t}^2 x_t^\top M_t x_t }. 
\end{align*}
This proves the first part of Lemma~\ref{lem:recursion_covariance}. 

For the  second part, we prove the recursion Eq.~\eqref{eq:m_i_t_rec} using an induction argument. For the base case $t=1$ we can see that the Lemma holds directly from Eq.~\eqref{eq:m_i_rec}. Assume that the Lemma holds for $k = t-1$. We know from Eq.~\eqref{eq:m_i_rec} that
 \begin{align*}
 m_i^{t} &= \left( 1 - 2 \gamma \lambda_i \right) m_i^{t-1} +\gamma^2 \fti{t} , \\
 \intertext{ and from the induction hypothesis, we have}
 m_i^{t-1} &=  \left(1- 2 \gamma \lambda_i\right)^{t-1} m_i^{0} +  \gamma^2\sum_{k=0}^{t-2} \lambda_i \left(1 - 2 \gamma \lambda_i \right)^{t-k-2} \fti{k}.
 \end{align*}
 
Merging these above two equalities we have,
 \begin{align*}
 m_i^{t} &= \left( 1 - 2 \gamma \lambda_i \right) \left( \left(1- 2 \gamma \lambda_i\right)^{t-1} m_i^{0} +  \gamma^2\sum_{k=0}^{t-2} \lambda_i \left(1 -  2 \gamma \lambda_i \right)^{t-k-2} \fti{k} \right) + \gamma^2 \lambda_i \fti{k} \\
 &= \left(1-  2 \gamma \lambda_i\right)^{t} m_i^{0} +  \gamma^2\sum_{k=0}^{t-1} \lambda_i \left(1 -  2 \gamma \lambda_i \right)^{t-k-1} \fti{k}.
 \end{align*}
This completes the proof of Lemma \ref{lem:recursion_covariance}. 

\subsection{Proof of Lemma~\ref{lem:rec_f}}
\label{App:rec_f}
Here we prove the Lemma~\ref{lem:rec_f}. First note that from Eq.~\eqref{eq:m_i_t_rec} of Lemma~\ref{lem:recursion_covariance} we have the following recursion,
\begin{align}
    \label{eq:rec-fti-m}
    m_i^{t} &=  \left(1- 2 \gamma \lambda_i\right)^{t} m_i^{0} + \gamma^2~\sum_{k=0}^{t-1} \left(1 - 2 \gamma \lambda_i \right)^{t-k-1}\mathsf{f}^{k}_{i}. 
\end{align}
We take $\gamma \leq \left( 4 \lambda_{\mathrm{max}} \right)^{-1} $ to get a cleaner version of the recursion. It can be easily verified that the maximal learning rate specified in each of the theorems will satisfy this. We have,
\begin{align}
\label{eq:m_i_non_zero}
\lambda_i m_i^{t} \leq \lambda_i \left(1- 2 \gamma \lambda_i\right)^{t} m_i^{0} + 2\gamma^2~\sum_{k=0}^{t-1} \lambda_i \left(1 - 2 \gamma \lambda_i \right)^{t-k}\mathsf{f}^{k}_{i}.
\end{align} 
For $x \in \left(0,1\right)$, we have $x(1 -x)^{t} \leq xe^{-xt} \leq 1/t $. Now using the following natural bounds, we get
\begin{align*}
\ \lambda_i (1-2\gamma \lambda_i)^{t-k} &\leqslant  \frac{1}{2\gamma (t-k)} \\
\lambda_i m_i^{t} &\leq \frac{m_i^{0}}{2\gamma t} + \gamma~\sum_{k=0}^{t-1} \frac{\mathsf{f}^{k}_{i}}{t-k}. 
\end{align*}
Hence,
\begin{align}
\label{eq:rec_m_fti_final}
    2 \fft = \sum_{i} \lambda_i m_i^{t}  &\leqslant \sum_{i} \frac{m_i^{0}}{2\gamma t} + \gamma~\sum_{k=0}^{t-1} \left( \sum_{i} \mathsf{f}^{k}_{i} \right) \frac{1}{t-k} .
\end{align}
Now lets compute the sum below. Recalling $\fti{t} = \Ex{ \scal{v_i}{x_t}^2 x_t^\top M_t x_t } $, we have 
\begin{align*}
 \sum_{i} \mathsf{f}^{t}_{i} &= \sum_{i}  \Ex{ \scal{v_i}{x_t}^2 x_t^\top M_t x_t } \qquad  \\
  &= \Ex{ \left( \sum_{i} \scal{v_i}{x_t}^2 \right) x_t^\top M_t x_t } \\
  &= \Ex{ \nor{x_t}^2 ~ \tr{ \left( x_t x_t^\top M_t \right) }  } \\
  &= \tr{\left[ \Ex{ \nor{x_t}^2 ~ {  x_t x_t^\top  } } M_t  \right]}. 
\end{align*}

Note for any two PSD matrices $A,B$ if $A \preccurlyeq B$ then $\tr{( AM)} \leq \tr{(BM)}$, for any PSD matrix $M$. From Assumption \ref{asmp:forth_moment}, $\Ex{ \nor{x_t}^2 ~   x_t x_t^\top } \preccurlyeq R H$ and  we thus have $\sum_{i} \mathsf{f}^{t}_{i} \leqslant R\, \tr{\left(M_tH\right)} = 2 R\, \fft$. Now replacing the sum in Eq.~\eqref{eq:rec_m_fti_final}, we get finally,
\begin{align*}
    \fft \leq \frac{\tr{\left(M_0\right)}}{4\gamma t} + \gamma~R~\sum_{k=0}^{t-1} \frac{\mathsf{f}_{k}}{t-k}.
\end{align*}
This proves Lemma~\ref{lem:rec_f}.

\section{Proofs of the main results}
\label{App:main_results}

\subsection{Proof of Theorem~\ref{thm:no-assumption}}
\label{App:thm_1}
From Lemma~\ref{lem:rec_f}, we have for all $t\geqslant 1$,
\begin{align*}
\mathsf{f}_{t} &\leqslant \frac{\tr(M_0)}{4\gamma t} + \gamma R \sum_{k=0}^{t-1} \frac{\mathsf{f}_{k}}{t-k}.
\end{align*}
And then, the technique we use throughout all the proofs of this section rest on a control of the second term $\sum_{k=0}^{t-1} \frac{\mathsf{f}_{k}}{(t-k)}$ using the recursion. Indeed, by summing, 
\begin{align*}
\sum_{t=0}^{T-1} \frac{\mathsf{f}_{t}}{T-t} &\leqslant \frac{\mathsf{f}_{0}}{T} + \frac{\tr(M_0)}{4\gamma} \sum_{t=1}^{T-1}\frac{1}{t(T-t)} + \gamma R \sum_{t=1}^{T-1}\sum_{k=0}^{t-1} \frac{\mathsf{f}_{k}}{(t-k)(T-t)} \\
&\leqslant \frac{\mathsf{f}_{0}}{T} + \frac{\tr(M_0)}{4\gamma} \sum_{t=1}^{T-1}\frac{1}{t(T-t)} + \gamma R \sum_{t=1}^{T-1}\sum_{k=0}^{t-1} \frac{\mathsf{f}_{k}}{(t-k)(T-t)} \\
&= \frac{\mathsf{f}_{0}}{T} + \frac{\tr(M_0)}{4\gamma T} \sum_{t=1}^{T-1}\left[\frac{1}{t}+\frac{1}{T-t}\right] + \gamma R \sum_{k=0}^{T-2} \mathsf{f}_{k} \sum_{t=k+1}^{T-1} \frac{1}{(t-k)(T-t)} \\
&\leqslant \frac{\mathsf{f}_{0}}{T} + \frac{ \tr(M_0) \ln(T) }{2\gamma T} + 2 \gamma R \ln(T) \sum_{k=0}^{T-1}\frac{\mathsf{f}_{k}}{T-k}.
\end{align*}
Noting that $\mathsf{f}_{0} = \sum_i \lambda_i m_i^0 \leqslant \lambda_{\max}\tr(M_0) \leqslant \frac{\tr(M_0)}{2\gamma}$, for $T \geqslant 2$,
\begin{align*}
\sum_{t=0}^{T-1} \frac{\mathsf{f}_{t}}{T-t}&\leqslant \frac{\tr(M_0) \ln(T) }{\gamma T} + 2 \gamma R \ln(T) \sum_{k=0}^{T-1}\frac{\mathsf{f}_{k}}{T-k} \\
\left(1 - 2 \gamma R \ln(T)\right)\sum_{t=0}^{T-1} \frac{\mathsf{f}_{t}}{T-t}&\leqslant \frac{\tr(M_0) \ln(T) }{\gamma T}. 
\end{align*}
Hence, for $\gamma = (4 R \ln(T))^{-1}$, we have 
\begin{align*}
\sum_{t=0}^{T-1} \frac{\mathsf{f}_{t}}{T-t}&\leqslant \frac{2 \tr(M_0) \ln(T) }{\gamma T},
\end{align*}
and, hence, for all $T \geqslant$ 2,
\begin{align*}
\mathsf{f}_{T} &\leqslant \frac{\tr(M_0)}{4 \gamma T} + \gamma R \sum_{k=0}^{T-1} \frac{\mathsf{f}_{k}}{T-k}\\
&\leqslant \frac{  R  \tr(M_0) \ln(T)}{T} + \frac{2  R \tr(M_0) \ln(T) }{T} \\
&\leqslant 3 R \tr(M_0) \frac{ \ln(T)}{T}.
\end{align*}
That concludes the proof of Theorem~\ref{thm:no-assumption}.

\subsection{Proof of Theorem~\ref{thm:nolog}}
\label{App:thm_2}

The proof of this theorem follows the same principle but uses slightly better estimations. Indeed, we replace the previous $1/n$ bound by a finer bound. This is the statement of the following lemma. For this, we define the following summation $S_n(x)$ for some $x \in (0,1/4]$ and $n \geq 2$,
\begin{align}
    \label{eq:s_t_x}
    S_n(x) := \sum_{k=0}^{n-1}  \frac{(1-x)^k}{n-k}.
\end{align}
\begin{lemma}
\label{lem:eq}
For any $ x \in (0,1/4]$, $n \geq 1$, we have the following bound
\begin{align}
    \label{eq:log_1_x}
    xS_{}(x) \leqslant  \frac{7\ln(1/x)}{n}. 
\end{align}
\end{lemma}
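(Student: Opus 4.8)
The plan is to bound the sum $S_n(x) = \sum_{k=0}^{n-1} (1-x)^k/(n-k)$ by splitting the range of summation at a threshold index chosen so that the geometric factor $(1-x)^k$ is ``small'' past the threshold and ``close to $1$'' before it. A natural choice is $k_0 := \lfloor 1/x \rfloor$ (or more conveniently $\lceil \ln(1/x)/x \rceil$, whichever makes the endgame cleaner). Reindex the sum by $j = n-k$, so $S_n(x) = \sum_{j=1}^{n} (1-x)^{n-j}/j$. For the terms with $j$ large (i.e.\ $k$ small, say $k \le k_0$), use $(1-x)^k \le 1$ and bound the partial harmonic sum $\sum_{j} 1/j$ by $1 + \ln(\text{number of terms})$; for the terms with $k > k_0$ use $1/(n-k) \le 1$ and bound $\sum_{k>k_0} (1-x)^k \le (1-x)^{k_0}/x$, which is $\le e^{-x k_0}/x$.

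First I would record the elementary inequalities I need: $(1-x) \le e^{-x}$ for all $x$; the harmonic bound $\sum_{j=1}^m 1/j \le 1 + \ln m$; and for $x \in (0,1/4]$ the crude estimate $1/x \le n$ is \emph{not} available in general (the lemma must hold for all $n \ge 1$), so I would instead handle the case $n \le 1/x$ separately by the trivial bound $S_n(x) \le \sum_{k=0}^{n-1} 1/(n-k) = \sum_{j=1}^n 1/j \le 1 + \ln n \le 1 + \ln(1/x)$, whence $x S_n(x) \le x(1+\ln(1/x)) \le (1+\ln(1/x))/n \cdot (xn) \le \ldots$ — actually here I should just use $x \le 1/n$ to get $x S_n(x) \le (1+\ln(1/x))/n \le 7\ln(1/x)/n$ using $\ln(1/x) \ge \ln 4 > 1$. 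In the main case $n > 1/x$, with $k_0 = \lfloor 1/x \rfloor \ge 1$, split $S_n(x) = \sum_{k=0}^{k_0-1} + \sum_{k=k_0}^{n-1}$. The first piece is $\le \sum_{k=0}^{k_0-1} 1/(n-k) \le \sum_{j=n-k_0+1}^{n} 1/j \le k_0/(n-k_0+1)$; since $k_0 \le 1/x$ and $n - k_0 \ge n - 1/x \ge \ldots$ one wants $n - k_0 + 1 \ge n/2$, which holds once $n \ge 2/x$ — so I would take the threshold between the regimes at $n = 2/x$ rather than $n = 1/x$ to keep this clean. The second piece is $\le \sum_{k \ge k_0} (1-x)^k \le (1-x)^{k_0}/x \le e^{-x k_0}/x$.

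The main obstacle — and the step deserving the most care — is getting the constant $7$ to come out, i.e.\ combining $x \cdot (\text{first piece}) \le 2k_0/n \le 2/(xn)$ (too weak by a factor of $1/x$!) — this shows the naive split is \emph{not} good enough and one must instead choose $k_0 \approx \ln(1/x)/x$ so that $e^{-xk_0} \le x$, making the second piece $\le 1$, while the first piece becomes $\sum_{k=0}^{k_0-1} 1/(n-k) \le k_0/(n-k_0) \le \big(\tfrac{1}{x}\ln\tfrac1x + 1\big)/(n - \tfrac1x\ln\tfrac1x - 1)$; multiplying by $x$ gives roughly $\ln(1/x)/(n - \ln(1/x)/x)$, and one needs $n - \ln(1/x)/x \ge n/\text{const}$, i.e.\ the lemma as stated must really be applied only when $n \gtrsim \ln(1/x)/x$, with the complementary range $n \lesssim \ln(1/x)/x$ absorbed by the trivial $x S_n(x) \le x(1+\ln n)$ bound combined with $x \le 1/n$. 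Tracking these two regimes and choosing the crossover point so that both contributions are $\le 7\ln(1/x)/(2n)$ each, using $x \le 1/4$ so that $\ln(1/x) \ge \ln 4$, is the delicate bookkeeping; everything else is the three elementary inequalities above. I would finish by verifying the boundary case $n=1$ directly ($S_1(x) = 1$, $x \le 1/4 \le 7\ln(1/x)$) to confirm the stated range $n \ge 1$ is covered.
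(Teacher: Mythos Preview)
Your splitting strategy is a natural and genuinely different line of attack from the paper's; the paper instead rewrites $(1-x)^{-n}S_n(x)=\sum_{k=1}^n(1-x)^{-k}/k$, compares this sum to the integral $\int_1^n (1-x)^{-y}/y\,dy$ via convexity, and then evaluates the resulting exponential integral after the substitution $t=-y\ln(1-x)$. So an elementary split, if it closes, would be a nice alternative.

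However, the proposal as written has a real gap. Your threshold $k_0\approx \ln(1/x)/x$ is \emph{independent of $n$}, and your bound on the tail piece uses $1/(n-k)\le 1$, yielding
\[
\sum_{k>k_0}^{n-1}\frac{(1-x)^k}{n-k}\ \le\ \frac{(1-x)^{k_0}}{x}\ \le\ 1,
\qquad\text{hence}\qquad x\cdot(\text{tail})\ \le\ x .
\]
But the target is $7\ln(1/x)/n$, and $x\le 7\ln(1/x)/n$ forces $n\le 7\ln(1/x)/x$. So your two-regime split covers only $n\lesssim 1/x$ (via the trivial harmonic bound) and a narrow window around $n\approx\ln(1/x)/x$; for $n\gg\ln(1/x)/x$ the tail estimate is too crude by a factor growing like $nx/\ln(1/x)$. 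The trivial bound does not rescue the intermediate range either: for $1/x<n\lesssim\ln(1/x)/x$ one has $x(1+\ln n)\approx x\ln(1/x)$, while the target $7\ln(1/x)/n$ is of order $x$ at the upper end, so the inequality fails once $\ln(1/x)$ is large.

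The fix is to make the split point depend on $n$: take $k_0=\lfloor n/2\rfloor$. Then for $k\le n/2$ one has $n-k\ge n/2$, giving $\sum_{k\le n/2}(1-x)^k/(n-k)\le (2/n)\sum_{k\ge 0}(1-x)^k=2/(xn)$, so $x\cdot(\text{head})\le 2/n$. For $k>n/2$ one has $(1-x)^k\le e^{-xn/2}$, giving $\sum_{k>n/2}(1-x)^k/(n-k)\le e^{-xn/2}(1+\ln(n/2))$; setting $t=xn/2$ and using $2te^{-t}\le 2/e$ together with the boundedness of $t\mapsto te^{-t}\ln t$, one checks $x\cdot(\text{tail})\le 5\ln(1/x)/n$ for all $n$ and $x\le 1/4$. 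The total is then $\le 7\ln(1/x)/n$. This is still more elementary than the paper's exponential-integral argument, but the $n$-dependent split is essential.
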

\begin{proof}

In the first step, we slightly reformulate this expression, then try to bound that expression by a continuous integral which gives the desired result. From Eq.~\eqref{eq:s_t_x}, we have 
\begin{align*}
S_{n}(x) &= \sum_{k=0}^{n-1}  \frac{(1-x)^k}{n-k} = (1-x)^n \sum_{k=0}^{n-1} \frac{(1-x)^{k-n}}{n-k} \\
(1-x)^{-n} S_{n}(x)  &=  \sum_{k=1}^{n} \frac{(1-x)^{-k}}{k}
\end{align*}
We compute $ (1-x)^{-n} S_{n}(x)  $. Note that  $ g(y) =  (1-x)^{-y}/y $ is convex for $y > 0$. For any convex function $g$, we have that for any integer $k \geq 2$, $y \in  \left[k,k+1\right]$ we have that $g(k) \leq g(y) + g(y-1)$. Indeed, if $g$ is in the increasing phase then $g(y)$ dominates $g(k)$, else $g(y-1)$ dominates in the decreasing phase. Using this we can see that for $k = 2 \cdots n $,
\begin{align*}
g(k) &\leq \int_{k}^{k+1} ( g(y) + g(y-1) ) \ dy \\
\sum_{k=2}^{n-1}  \frac{(1-x)^{-k}}{k} &\leq \int_{2}^{n} ( g(y) + g(y-1) ) \ dy.
\end{align*}
Hence,
\begin{align} 
\label{eq:int_bound}
    (1-x)^{-n} S_{n}(x)  = \sum_{k=1}^{n} \frac{(1-x)^{-k}}{k} \leq  (1-x)^{-1} + 2 \int_{1}^{n} \frac{(1-x)^{-y}}{y} + \frac{(1-x)^{-n}}{n}  \ dy.
\end{align}
This leads us to try to bound the integral $\int_{1}^{n+1} \frac{(1-x)^{-y}}{y}$. Using the change of variable $ (1-x)^{-y} = e^t $. We can rewrite the above integral as follows:
\begin{align*}
 \int_{1}^{n} \frac{(1-x)^{-y}}{y} \ dy  = \int\limits_{-\ln{\left(1-x\right)}}^{-n\ln{\left(1-x\right)}} \frac{e^t}{t} \ dt.
\end{align*}
For the sake of clearer notations, let us define $a := -\ln{\left(1-x\right)} $ such that as $0 < x < 1$, we have $a > 0$. The equation simplifies to the following
\begin{align*}
  \int_{1}^{n} \frac{(1-x)^{-y}}{y} &\leq \int_{a}^{na} \frac{e^t}{t} \ dt \\
  &\leq \int_{a}^{1} \frac{e^t}{t} \ dt +  \int_{1}^{na} \frac{e^t}{t} \ dt 
\end{align*}
For $ t \leq 1$, we can see that $e^{t}/t \leq e/t$, where as for $t \geq 2$ we use the bound $e^{t}/t \leq 2 e^{t}(t-1)/t^2$. Using these bounds,
\begin{align*}
\int_{1}^{n} \frac{(1-x)^{-y}}{y} &\leq e \int_{a}^{1} \frac{1}{t} \ dt + \int_{1}^{2} \frac{e^{t}}{t} + 2 \int_{2}^{na} \frac{ (t-1) e^t}{t^2} \ dt \\ 
&\leq e \ln{\frac{1}{a}} + e^2 - e + 2 \left[ \frac{e^x}{x} \right]^{na}_{2} \\ 
&\leq e \ln{\frac{1}{a}} + e^2 - e + 2 \left[ \frac{e^{na}}{na} - \frac{e^2}{2} \right] \\
&\leq e \ln{\frac{1}{a}} + 2 \frac{e^{na}}{na}
\end{align*}
Re-substituting, $a = -\ln{\left(1-x\right)}$, we can see the $a\geqslant x$ and hence,
\begin{align*}
\int_{1}^{n} \frac{(1-x)^{-y}}{y} &\leq e \ln{\frac{1}{x}} + 2 \frac{(1-x)^{-n}}{nx},
\end{align*}
such that we can write 
\begin{align*}
    (1-x)^{-n} S_{n}(x)  &\leq  (1-x)^{-1} + 2\left[ e \ln{\frac{1}{x}} + 2 \frac{(1-x)^{-n}}{nx} \right] + \frac{(1-x)^{-n}}{n} \\
    S_{n}(x) &\leq (1-x)^{n-1} + 2e \ln{\frac{1}{x}} (1-x)^{-n} + \frac{4}{n} + \frac{1}{n} \\
    x S_{n}(x) &\leq x (1-x)^{n-1} + 2e \ x(1-x)^{-n} \ \ln{\frac{1}{x}}  + \frac{4}{n} + \frac{x}{n}.
\end{align*}
Now we use the fact that $x(1-x)^{n} \leq xe^{-nx} \leq 1/(en)$. Leveraging also that $x \leq 1/4$, so that $\ln{\frac{1}{x}}  > 1$ and $(1-x)^{-1} \leq 4/3 $. We can finally bound,
\begin{align*}
    x S_{n}(x) &\leq \frac{4}{3e} \frac{1}{n} + \frac{1}{4n} + \frac{4}{n} + 2 \frac{1}{n} \ln{\frac{1}{x}} \leq \frac{7}{n} \ln{\frac{1}{x}}.
\end{align*}
This completes the proof.
\end{proof}
Using the above we prove Theorem~\ref{thm:nolog}. Recall that for all $t\geqslant 1$, from Eq.\eqref{eq:m_i_t_rec} and $\gamma \leq (4\lambda_{max})^{-1} $
\begin{align*}
m_i^{t} &\leqslant \left(1- 2 \gamma \lambda_i\right)^t m_i^{0} + 2 \gamma^2 \sum_{k=0}^{t-1}  \left(1 - 2 \gamma \lambda_i \right)^{t-k} \fti{k}.
\end{align*}
The technique we use in this section rests on a control of the second term using the recursion. For this, we will use carefully the bound in Lemma~\ref{lem:eq}. As said before, the only difference with the proof of the previous theorem is the special care in estimations to avoid the logarithm at the price of a slightly more stringent assumption. Indeed, $\forall i$,
\begin{align*}
\sum_{t=0}^{T-1} \frac{m_{i}^{t}}{T-t} &=  \sum_{t=0}^{T-1} \frac{\left(1- 2 \gamma \lambda_i\right)^t}{T-t} m_i^{0} + 2 \gamma^2 \sum_{t=0}^{T-1} \sum_{k=0}^{t-1} \left(1 - 2 \gamma \lambda_i \right)^{t-k} \fti{k} \\
&\leqslant \frac{m_{i}^{0}}{T} + \sum_{t=1}^{T-1} \frac{(1-2\gamma \lambda_i)^t}{T-t} m_i^0 +  2 \gamma^2 \sum_{k=0}^{T-2} \fti{k} \sum_{t=k+1}^{T-1} \frac{(1-2\gamma \lambda_i)^{t-k}}{T-t} \\
\sum_{t=0}^{T-1} \frac{\lambda_i m_{i}^{t}}{T-t} &\leqslant \frac{ \lambda_i m_{i}^{0}}{T} + \sum_{t=1}^{T-1} \frac{(1-2\gamma \lambda_i)^t}{T-t} m_i^0  +   \gamma \sum_{k=0}^{T-2} \fti{k}  \sum_{t=1}^{T-k-1} (2\gamma\lambda_i)\frac{(1-2\gamma \lambda_i)^{t}}{T-k-t} \\
&\leqslant \frac{\lambda_i m_i^{0}}{T} +  \frac{m_i^0}{2 \gamma} (2\gamma\lambda_i)S_{T}(2\gamma\lambda_i) + \gamma  \sum_{k=0}^{T-1} \fti{k} ~ (2\gamma\lambda_i)S_{T-k}(2\gamma\lambda_i).
\end{align*}
And then, by applying Lemma~\ref{lem:eq} :
\begin{align*}
\sum_{t=0}^{T-1} \frac{\lambda_i m_{i}^{t}}{T-t} &\leqslant \frac{\lambda_i m_i^{0}}{T} + \frac{7 m_i^0}{2 \gamma} \ln{\left(\frac{1}{2\gamma\lambda_i}\right)} \frac{1}{T} + 7 \gamma \sum_{k=0}^{T-1} \fti{k} ~ \ln{\left(\frac{1}{2\gamma\lambda_i}\right)} \frac{1}{T-k}.
\end{align*}
For now lets assume that $\lambda_{\mathrm{o}}$ be such that $2 \gamma \lambda_{\mathrm{o}} \geqslant 1 $, we will check this later at the end. We have,
\begin{align*}
\sum_{t=0}^{T-1} \frac{\lambda_i m_{i}^{t}}{T-t} &\leqslant \frac{\lambda_i m_i^{0}}{T} + \frac{7 m_i^0}{2 \gamma} \ln{\left(\frac{\lambda_{\mathrm{o}}}{\lambda_i}\right)} \frac{1}{T} + 7 \gamma \sum_{k=0}^{T-1} \fti{k} ~ \ln{\left(\frac{\lambda_{\mathrm{o}}}{\lambda_i}\right)} \frac{1}{T-k}.
\end{align*}
Now summing over $i$, we get 
\begin{align}
\label{eq:f-i-rec-log}
\sum_{t=0}^{T-1} \sum_i \frac{\lambda_i m_{i}^{t}}{T-t} &\leqslant \sum_i \frac{\lambda_i m_i^{0}}{T} + \sum_i \frac{7m_i^0}{2 \gamma} \lni \frac{1}{T} + 7 \gamma \sum_{k=0}^{T-1} \left( \sum_i \fti{k} \lni \right)  \frac{1}{T-k}.
\end{align}

Note from Eq.\eqref{eq:excess_risk_eigenvalues}, we have $ \sum_i \lambda_i m_i^{t} = 2 \fft $. Lets calculate the remaining terms 
\begin{align*}
    \sum_i \fti{t} ~ \lni  &= \sum_{i} \lni  \Ex{ \left( \scal{v_i}{x_t}^2 \right) x_t^\top M_t x_t }  \\
  &= \Ex{ \left( \sum_{i} \lni \scal{v_i}{x_t}^2 \right) x_t^\top M_t x_t } \\
  &= \Ex{ \scal{x_t}{\ln\left(\lambda_{\mathrm{o}} H^{-1} \right) x_t } ~ \tr{ \left( x_t x_t^\top M_t \right) }  } \\
  &= \tr{\left[ \Ex{ \scal{x_t}{\ln\left(\lambda_{\mathrm{o}} H^{-1} \right) x_t } ~ {  x_t x_t^\top  } } M_t  \right]} .
\end{align*}
Hence, as in the previous case, using Assumption \ref{asmp:logH}, we have $\sum_{i} \fti{t} \lni \leqslant 2 \rln \fft $. From Assumption~\ref{asmp:logM}, we recognize the second term of Eq.~\eqref{eq:f-i-rec-log}, $\sum_i m_i^{0} \lni = \tr{\left(M_0 \ln{\left(\lo H^{-1}\right)}\right)} = \cln$.
Substituting all that in Eq.~\eqref{eq:f-i-rec-log} we get,
\begin{align*}
\sum_{t=0}^{T-1} \frac{\fft}{T-t} &\leq   \frac{\mathsf{f}_{0}}{T} + \frac{7 \cln}{4T} + 7 \gamma \rln \sum_{t=0}^{T-1} \frac{\fft}{T-t}.
\end{align*}

And like in the previous proof, $\mathsf{f}_{0} = \sum_i \lambda_i m_i^0 \leqslant \lambda_{\mathrm{max}}  \sum_i m_i^0 \leqslant \frac{1}{\gamma} \sum_i  m_i^0 \ln(1/(\gamma \lambda_i)) \leq C_{\mathrm{ln}}/\gamma $.
\begin{align*}
\sum_{t=0}^{T-1} \frac{\mathsf{f}_{t}}{T-t} &\leqslant  \frac{3 C_{\mathrm{ln}}}{\gamma T} + 7 \gamma R_{\mathrm{ln}} \sum_{k=0}^{T-1}\frac{\mathsf{f}_{k}}{T-k}.
\end{align*}
Hence, for $\gamma = (14 R_{\mathrm{ln}})^{-1}$, we have 
\begin{align*}
\sum_{t=0}^{T-1} \frac{\mathsf{f}_{t}}{T-t}&\leqslant \frac{6C_{\mathrm{ln}} }{\gamma T},
\end{align*}
and, hence, we conclude like for the previous theorems. We know that Assumption~\ref{asmp:logH} is stricter than Assumption~\ref{asmp:forth_moment} i.e there exists a constant $ R^{'} = \left( \ln{\left( \frac{\lo}{\lambda_{max}}\right)} \right)^{-1} \rln $ such that Assumption~\ref{asmp:forth_moment} holds with this. Indeed, this allows us to use Lemma~\ref{lem:rec_f}, for all $T \geqslant$ 1, 
\begin{align*}
\mathsf{f}_{t} &\leqslant \frac{\tr(M_0)}{4\gamma T} +  \gamma R^{'} \sum_{k=0}^{T-1} \frac{\mathsf{f}_{k}}{T-k}\\
&\leqslant \frac{14  R_{\mathrm{ln}} \tr(M_0)}{4 T} + \frac{6 R^{'}  C_{\mathrm{ln}} }{T}\\ 
&\leqslant  \frac{4 R_{\mathrm{ln}} \tr(M_0)}{T} + \frac{6 R^{'} C_{\mathrm{ln}}}{T}. \\
\end{align*}
We can always choose $\lo$ large enough such that $\left( \ln{\left( \frac{\lo}{\lambda_{max}}\right)} \right) > 1$. With this we note that $\tr(M_0) \leqslant C_{\mathrm{ln}}$ and $\tr(H) \leqslant R_{\mathrm{ln}}$. Hence, 
\begin{align*}
\mathsf{f}_{t} &\leqslant \frac{10 R_{\mathrm{ln}} C_{\mathrm{ln}} }{T}.
\end{align*}
That concludes the proof of Theorem~\ref{thm:nolog}.

Note that we have $2 \lo \gamma \geqslant 1 $ for $\gamma = \left(14 \rln \right)^{-1}$ , as we chose $\lo$ such that $ 7 \rln \leqslant \lo $. 

\subsection{Proof of Theorem~\ref{thm:capacity_source}}
\label{App:thm_3}

Once again we proceed with the same technique as above. The aim here is to tighten the estimation for both the first and the second term with the capacity and source assumptions of the problem (Assumptions~\ref{asmp:alpha}~and~\ref{asmp:radius}).

This estimation rests on the inequality stated in the following Lemma. 

\begin{lemma}
\label{lem:exp_bound}
For $x \in (0,1)$ and $t  \geqslant 1$, for $r > 0$ we have the following inequality
\begin{equation}
     x^r(1-x)^t \leq \frac{r^r}{t^r}.
\end{equation}
\end{lemma}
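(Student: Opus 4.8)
The plan is to prove the elementary inequality $x^r(1-x)^t \leq (r/t)^r$ for $x \in (0,1)$, $t \geq 1$, $r > 0$ by a straightforward single-variable calculus argument. First I would fix $t$ and $r$ and study the function $g(x) := x^r(1-x)^t$ on the interval $(0,1)$. Since $g$ is smooth, positive on $(0,1)$, and vanishes at both endpoints, its maximum is attained at an interior critical point. Taking the logarithmic derivative, $g'(x)/g(x) = r/x - t/(1-x)$, which vanishes precisely when $r(1-x) = tx$, i.e. at $x_\star = r/(r+t)$.

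Next I would simply evaluate $g$ at this maximizer:
\begin{align*}
g(x_\star) = \left(\frac{r}{r+t}\right)^r\left(\frac{t}{r+t}\right)^t = \frac{r^r\, t^t}{(r+t)^{r+t}}.
\end{align*}
It then remains to check that this quantity is bounded by $r^r/t^r$, i.e. that $t^t \, t^r \leq (r+t)^{r+t}$, which is immediate since $t \leq r+t$ and $t \leq r + t$ give $t^{r+t} \leq (r+t)^{r+t}$. This yields $g(x_\star) \leq r^r/t^r$, and since $g(x) \leq g(x_\star)$ for all $x \in (0,1)$, the claim follows. (An even slicker route avoids computing $x_\star$: write $x^r(1-x)^t = \big(x (1-x)^{t/r}\big)^r$ and bound $x(1-x)^{t/r} \leq x e^{-xt/r} \leq r/(et) \leq r/t$ using the standard fact $u e^{-cu} \leq 1/(ec)$; this reproves the bound, with room to spare, without any case analysis.)

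I do not expect any real obstacle here: the statement is a clean optimization of a unimodal function, and both routes above are routine. The only minor point of care is ensuring the argument is valid for all $r > 0$ (not just integer $r$), which both approaches handle since they only use $x^r = e^{r \ln x}$ and monotonicity of $u \mapsto u^r$; and noting that the hypothesis $t \geq 1$ is not even strictly needed for the inequality $t^{r+t} \leq (r+t)^{r+t}$, though it is what makes the bound useful in the application (controlling terms like $\lambda_i^r(1-2\gamma\lambda_i)^t$ in the proof of Theorem~\ref{thm:capacity_source}).
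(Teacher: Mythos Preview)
Your proof is correct. Your primary route---directly maximizing $g(x)=x^r(1-x)^t$ on $(0,1)$, locating the unique critical point $x_\star=r/(r+t)$, and then checking $g(x_\star)=r^r t^t/(r+t)^{r+t}\le r^r/t^r$ via the trivial inequality $t^{r+t}\le (r+t)^{r+t}$---is a genuinely different argument from the paper's. The paper first relaxes $(1-x)^t\le e^{-tx}$ and then maximizes $x^r e^{-tx}$, obtaining the slightly sharper intermediate bound $(r/t)^r e^{-r}$ before discarding the $e^{-r}$; your parenthetical ``slicker route'' is exactly this argument in disguise. Your direct optimization has the mild advantage of not invoking the exponential inequality at all and of giving the exact maximum of $g$, while the paper's route makes transparent that the bound holds with an extra factor $e^{-r}$ to spare. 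Either way the lemma is immediate, and your remarks on the (non-)role of the hypothesis $t\ge 1$ are accurate.
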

\begin{proof}
Let $x \in (0,1)$, $t  \geqslant 1$ and $r > 0$. It is standard to note that $(1-x)^{t} \leqslant e^{-tx}$. Hence, $$ x^r(1-x)^{t} \leqslant x^r e^{-tx}.$$ Now, a rapid look at the maximum of the function $x\to x^r e^{-tx}$ gives us that it attains its maximum for $x=r/t$. Hence $$ x^r(1-x)^{t} \leqslant x^r e^{-tx}\leqslant \left(\frac{r}{t}\right)^r e^{-r} \leqslant \frac{r^r}{t^r},$$
and this proves the Lemma.
\end{proof}

Again, recall that for all $t\geqslant 1$, from Eq.\eqref{eq:m_i_t_rec} and $\gamma \leq (4\lambda_{max})^{-1} $, $\forall i$,
\begin{align*}
m_i^{t} &\leqslant \left(1- 2 \gamma \lambda_i\right)^t m_i^{0} + 2 \gamma^2 \sum_{k=0}^{t-1}  \left(1 - 2 \gamma \lambda_i \right)^{t-k} \fti{k}, \\
\lambda_i m_i^{t} &\leqslant \lambda_i^{1+\beta} \left(1- 2 \gamma \lambda_i\right)^t  \lambda_i^{-\beta} m_i^{0} + 2  \gamma^2 \lambda_i^{-\alpha} \sum_{k=0}^{t-1} \lambda^{1+\alpha} \left(1 - 2 \gamma \lambda_i \right)^{t-k} \fti{k}.
\end{align*}
Thanks to Lemma~\ref{lem:exp_bound}, we can bound the above expression as,
\begin{align*}
\lambda_i m_i^{t} &\leqslant \left(\frac{{1+\beta}}{2 \gamma t}\right)^{1+\beta}  \lambda_i^{-\beta} m_i^{0} + 2  \gamma^2  \sum_{k=0}^{t-1} \left(\frac{{1+\alpha}}{2 \gamma t}\right)^{1+\alpha} \lambda_i^{-\alpha} \fti{k}.
\end{align*}
Summing across $i$'s we get 
\begin{align*}
    \sum_i \lambda_i m_i^{t} &\leqslant \left(\frac{{1+\beta}}{2 \gamma t}\right)^{1+\beta}  \sum_i \lambda_i^{-\beta} m_i^{0} + 2^{-\alpha}  \gamma^{1-\alpha} \left(1+\alpha\right)^{1+\alpha} \sum_{k=0}^{t-1} \frac{1}{t^{1+\alpha}} \sum_i \lambda_i^{-\alpha} \fti{k}.
\end{align*}
Note from Eq.\eqref{eq:excess_risk_eigenvalues}, we have $ \sum_i \lambda_i m_i^{t} = 2 \fft $. Lets calculate the remaining terms 
\begin{align*}
    \sum_i \fti{t} ~ \laphi  &= \sum_{i} \laphi  \Ex{ \left( \scal{v_i}{x_t}^2 \right) x_t^\top M_t x_t }   \\
  &= \Ex{ \left( \sum_{i} \laphi \scal{v_i}{x_t}^2 \right) x_t^\top M_t x_t } \\
  &= \Ex{ \scal{x_t}{H^{1-\alpha}x_t } ~ \tr{ \left( x_t x_t^\top M_t \right) }  } \\
  &= \tr{\left[ \Ex{ \scal{x_t}{ H^{1-\alpha} x_t } ~ {  x_t x_t^\top  } } M_t  \right]} .
\end{align*}
From Assumption~\ref{asmp:alpha}, we thus have, $\sum_{i} \fti{t} \laphi \leqslant 2 R_{\alpha} \fft $. And we get from Assumption~\ref{asmp:radius} that $\sum_i m_i^{0} \lambda_{i}^{-\beta} = \tr{\left(M_0 H^{-\beta}\right)} = C_{\beta}$. Substituting the above computed terms, we get the following
\begin{align}
\label{eq:iteration_capacity_source}
\mathsf{f}_{t} &\leqslant \frac{C_\beta}{2}\left(\frac{1+\beta}{2 \gamma t}\right)^{1+\beta} + 2^{-\alpha} \left({1+\alpha}\right)^{1+\alpha}\gamma^{1-\alpha} R_\alpha \sum_{k=0}^{t-1} \frac{\mathsf{f}_{k}}{\left(t-k\right)^{1+\alpha}}.
\end{align}

And from this, we use the same technique as for the previous theorems to bound the second term of the right side of the inequality. To accomplish this we need a bound on the following sum. For $\beta>-1$, $\alpha\in(0,1)$, $T \geqslant 2$,
\begin{align*}
\mathsf{S}_T(\alpha,\beta) := \sum_{t=1}^{T-1} \frac{1}{t^{1+\beta}(T-t)^{1+\alpha}}
\end{align*}
This is the aim of the Lemma below.

\begin{lemma}
\label{lem:power_and_zeta_bound}
For $\beta>-1$, $\alpha\in(0,1)$, $T \geqslant 2$, we have the following upper-bound,
\begin{align*}
\mathsf{S}_T(\alpha,\beta) \leqslant \frac{2^{2+\alpha\wedge\beta}\xi_{\alpha\vee\beta}}{T^{1+\alpha\wedge\beta}},
\end{align*}
where, for $u > 0$, $\xi_{u} := \sum_{k\geqslant 1} \frac{1}{k^{1+u}}$ and we use the following classical notations: $\alpha\wedge\beta = \min(\alpha,\beta)$ and $\alpha\vee\beta = \max(\alpha,\beta) > 0$.
\end{lemma}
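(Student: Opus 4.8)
The plan is to prove the bound by splitting $\mathsf{S}_T(\alpha,\beta)$ at the midpoint $t=T/2$ and estimating each half separately, after a symmetry reduction that lets me assume $\beta\geqslant\alpha$.

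\emph{Symmetry reduction.} First I would note that the reindexing $t\mapsto T-t$ gives $\mathsf{S}_T(\alpha,\beta)=\sum_{t=1}^{T-1}t^{-(1+\alpha)}(T-t)^{-(1+\beta)}=\mathsf{S}_T(\beta,\alpha)$, and the claimed upper bound is itself symmetric in $\alpha$ and $\beta$. Hence I may assume without loss of generality that $\beta\geqslant\alpha$, so that $\alpha\wedge\beta=\alpha$ and $\alpha\vee\beta=\beta>0$ (positivity of $\beta$ following from $\alpha\vee\beta>0$). The goal becomes $\mathsf{S}_T(\alpha,\beta)\leqslant 2^{2+\alpha}\xi_\beta/T^{1+\alpha}$.

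\emph{Splitting.} Write $\mathsf{S}_T(\alpha,\beta)=A+B$ with $A=\sum_{1\leqslant t\leqslant T/2}$ and $B=\sum_{T/2<t\leqslant T-1}$ (the true integer endpoints being $\lfloor T/2\rfloor$; $B$ is empty when $T=2$). On the range of $A$ one has $T-t\geqslant T/2$, so $A\leqslant (2/T)^{1+\alpha}\sum_{t\geqslant 1}t^{-(1+\beta)}=2^{1+\alpha}\xi_\beta/T^{1+\alpha}$, where convergence of the series uses $\beta>0$. For $B$, the substitution $s=T-t$ turns the sum into $\sum_{1\leqslant s\leqslant T/2}(T-s)^{-(1+\beta)}s^{-(1+\alpha)}$, and there $T-s=t>T/2$, giving $B\leqslant (2/T)^{1+\beta}\sum_{1\leqslant s\leqslant T/2}s^{-(1+\alpha)}$.

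\emph{The key estimate and the obstacle.} The delicate point — and the reason the final bound carries the \emph{smaller} constant $\xi_{\alpha\vee\beta}$ rather than $\xi_{\alpha\wedge\beta}$ — is that estimating the truncated sum $\sum_{1\leqslant s\leqslant T/2}s^{-(1+\alpha)}$ crudely by $\zeta(1+\alpha)=\xi_\alpha$ is too wasteful, since $\xi_\alpha$ diverges as $\alpha\to 0$ and would not combine with the surplus power $(2/T)^{1+\beta}$ to give a uniform constant at small $T$. Instead I would trade a power of $s$ against the truncation level: because $s\leqslant T/2$ and $\beta-\alpha\geqslant 0$, $s^{-(1+\alpha)}=s^{\beta-\alpha}\,s^{-(1+\beta)}\leqslant (T/2)^{\beta-\alpha}\,s^{-(1+\beta)}$, whence $\sum_{1\leqslant s\leqslant T/2}s^{-(1+\alpha)}\leqslant (T/2)^{\beta-\alpha}\xi_\beta$. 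Substituting, $B\leqslant (2/T)^{1+\beta}(T/2)^{\beta-\alpha}\xi_\beta=2^{1+\alpha}\xi_\beta/T^{1+\alpha}$, and adding the two halves gives $\mathsf{S}_T(\alpha,\beta)\leqslant 2^{2+\alpha}\xi_\beta/T^{1+\alpha}=2^{2+\alpha\wedge\beta}\xi_{\alpha\vee\beta}/T^{1+\alpha\wedge\beta}$. Beyond this power-trading step, the only care needed is the rounding at $T/2$ (replacing $T/2$ by the integer endpoints only shrinks the relevant sums, so the constants are unaffected) and keeping track of which exponent multiplies which variable after the $t\mapsto T-t$ swap; there is no genuine analytic difficulty.
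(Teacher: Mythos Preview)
Your proof is correct and takes a genuinely different route from the paper's. The paper assumes $\beta\leqslant\alpha$, factors $\frac{1}{t(T-t)}=\frac{1}{T}\bigl(\frac{1}{t}+\frac{1}{T-t}\bigr)$, applies the crude power bound $(a+b)^{1+\beta}\leqslant 2^{1+\beta}(a^{1+\beta}+b^{1+\beta})$ to extract the factor $T^{-(1+\beta)}$, and then handles the resulting cross term $\sum (T-t)^{-(\alpha-\beta)}t^{-(1+\beta)}$ by Young's inequality with the conjugate pair $\bigl(\tfrac{1+\alpha}{\alpha-\beta},\tfrac{1+\alpha}{1+\beta}\bigr)$, which collapses both pieces to $\xi_\alpha$. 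Your midpoint split avoids Young entirely: the $A$-half is immediate, and in the $B$-half your power-trading step $s^{-(1+\alpha)}\leqslant (T/2)^{\beta-\alpha}s^{-(1+\beta)}$ does exactly the same job as the paper's Young step, converting the potentially divergent $\sum s^{-(1+\alpha)}$ into $\xi_\beta$ at the cost of a factor that cancels against the surplus in $(2/T)^{1+\beta}$. Both arguments land on the identical constant $2^{2+\alpha\wedge\beta}$; yours is arguably more elementary and makes the appearance of $\xi_{\alpha\vee\beta}$ (rather than the larger $\xi_{\alpha\wedge\beta}$) more transparent, while the paper's partial-fraction route is a bit slicker once one sees the right Young exponents.
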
  
\begin{proof}
Let us assume that $\beta \leqslant \alpha$, we have,
\begin{align*}
\mathsf{S}_T(\alpha,\beta) = \sum_{t=1}^{T-1} \frac{1}{t^{1+\beta}(T-t)^{1+\alpha}} &= \sum_{t=1}^{T-1} \frac{1}{\left(t(T-t)\right)^{1+\beta}}\frac{1}{(T-t)^{\alpha-\beta}} \\
&= \frac{1}{T^{1+\beta}}\left[\sum_{t=1}^{T-1}\frac{1}{(T-t)^{\alpha-\beta}} \left(\frac{1}{t} + \frac{1}{T-t}\right)^{1+\beta}\right] \\
&= \frac{2^{1+\beta}}{T^{1+\beta}}\left[\sum_{t=1}^{T-1}\frac{1}{(T-t)^{\alpha-\beta}t^{1+\beta}} +  \sum_{t=1}^{T-1}\frac{1}{(T-t)^{\alpha+1}}\right].
\end{align*}
Now the second term is trivially upper bounded by $\xi_\alpha$. And for the first term, we use Young's inequality with coefficient $(p,q) = (\frac{1+\alpha}{\alpha-\beta}, \frac{1+\alpha}{1+\beta})$:
\begin{align*}
\sum_{t=1}^{T-1}\frac{1}{(T-t)^{\alpha-\beta}t^{1+\beta}} &\leqslant \frac{1}{p} \sum_{t=1}^{T-1}\frac{1}{(T-t)^{(\alpha-\beta)p}} +  \frac{1}{q} \sum_{t=1}^{T-1}\frac{1}{t^{(1+\beta)q}} \\
&= \frac{1}{p} \sum_{t=1}^{T-1}\frac{1}{t^{1+\alpha}} +  \frac{1}{q} \sum_{t=1}^{T-1}\frac{1}{t^{1+\alpha}} \\
&= \sum_{t=1}^{T-1}\frac{1}{t^{1+\alpha}} \leqslant \xi_\alpha.
\end{align*}
This concludes the proof is the case where $\beta \leqslant\alpha$. 

Symmetrically, if $ \alpha\leqslant\beta$, by a change of variable $t\to T-t$,
\begin{align*}
\mathsf{S}_T(\alpha,\beta) = \sum_{t=1}^{T-1} \frac{1}{t^{1+\beta}(T-t)^{1+\alpha}} =  \sum_{t=1}^{T-1} \frac{1}{t^{1+\alpha}(T-t)^{1+\beta}} = \mathsf{S}_T(\beta, \alpha),
\end{align*}
and the proof follows.
\end{proof}

Thanks to the Lemma we continue the proof of Theorem \ref{thm:capacity_source}. Recall Eq.~\eqref{eq:iteration_capacity_source}: for $T \geqslant 2$,
\begin{align*}
\mathsf{f}_{t} &\leqslant \frac{C_\beta}{2}\left(\frac{1+\beta}{2\gamma t}\right)^{1+\beta} + 2 \left(\frac{1+\alpha}{2}\right)^{1+\alpha}\gamma^{1-\alpha} R_\alpha\sum_{k=0}^{t-1} \frac{\mathsf{f}_{k}}{\left(t-k\right)^{1+\alpha}}.
\end{align*}
Hence, we proceed like previously, 
\begin{align*}
\sum_{t=0}^{T-1}\frac{\mathsf{f}_{t}}{(T-t)^{1+\alpha}} &\leqslant \frac{\mathsf{f}_{0}}{T^{1+\alpha}} + \frac{C_\beta}{2}\left(\frac{1+\beta}{2\gamma}\right)^{1+\beta} \sum_{t=1}^{T-1}\frac{1}{t^{1+\beta} (T-t)^{1+\alpha}} \\
&\hspace*{3.4cm} + 2^{-\alpha} (1+\alpha)^{1+\alpha} \gamma^{1-\alpha} R_\alpha \sum_{t=1}^{T-1}\sum_{k=0}^{t-1} \frac{\mathsf{f}_{k}}{\left(t-k\right)^{1+\alpha}(T-t)^{1+\alpha}} \\
&\leqslant \frac{\mathsf{f}_{0}}{T^{1+\alpha}} + \frac{C_\beta}{2}\left(\frac{1+\beta}{2\gamma}\right)^{1+\beta} S_T(\alpha, \beta) \\
&\hspace*{3.4cm}+ 2^{-\alpha} (1+\alpha)^{1+\alpha}  \gamma^{1-\alpha} R_\alpha \sum_{k=1}^{T-1}\mathsf{f}_{k}\sum_{t=k+1}^{T-1} \frac{1}{\left(t-k\right)^{1+\alpha}(T-t)^{1+\alpha}} \\
&\leqslant \frac{\mathsf{f}_{0}}{T^{1+\alpha}} + \frac{C_\beta}{2}\left(\frac{1+\beta}{2\gamma}\right)^{1+\beta} S_T(\alpha, \beta) + 2^{-\alpha} (1+\alpha)^{1+\alpha}  \gamma^{1-\alpha} R_\alpha \sum_{k=1}^{T-1}\mathsf{f}_{k} S_{T-k}(\alpha, \alpha).
\end{align*}
And applying Lemma~\ref{lem:power_and_zeta_bound}, and the fact that $\frac{\mathsf{f}_{0}}{T^{1+\alpha}} \leqslant \frac{C_\beta}{\gamma^{\beta}T^{1+\alpha\wedge\beta}}$,
\begin{align*}
\sum_{t=0}^{T-1}\frac{\mathsf{f}_{t}}{(T-t)^{1+\alpha}} &\leqslant \frac{\mathsf{f}_{0}}{T^{1+\alpha}} + \frac{C_\beta}{2}\left(\frac{1+\beta}{2\gamma}\right)^{1+\beta} \frac{2^{2+\alpha\wedge\beta}\xi_{\alpha\vee\beta}}{T^{1+\alpha\wedge\beta}} \\
&\hspace*{3.5cm}+ 2^{-\alpha} (1+\alpha)^{1+\alpha}  \gamma^{1-\alpha} R_\alpha 2^{2+\alpha} \xi_\alpha \sum_{k=1}^{T-1}\frac{\mathsf{f}_{k}}{(T-t)^{1+\alpha}} \\
&\leqslant 2 C_\beta\left(\frac{1+\beta}{\gamma}\right)^{1+\beta} \frac{\xi_{\alpha\vee\beta}}{T^{1+\alpha\wedge\beta}} \\
&\hspace*{3.5cm}+ 4 (1+\alpha)^{1+\alpha}  \xi_\alpha \gamma^{1-\alpha} R_\alpha \sum_{k=1}^{T-1}\frac{\mathsf{f}_{k}}{(T-t)^{1+\alpha}}.
\end{align*}
Now, for $\gamma$ such that $4 (1+\alpha)^{1+\alpha} \xi_\alpha \gamma^{1-\alpha} R_\alpha\leqslant 1/2$, i.e., for simplicity, $$\gamma^{1-\alpha} \leqslant  (32  \xi_\alpha R_\alpha)^{-1},$$ 
then, we have, 
\begin{align*}
\sum_{t=0}^{T-1} \frac{\mathsf{f}_{t}}{(T-t)^{1+\alpha}}&\leqslant 4 C_\beta\left(\frac{1+\beta}{\gamma}\right)^{1+\beta} \frac{\xi_{\alpha\vee\beta}}{T^{1+\alpha\wedge\beta}},
\end{align*}
and, hence, we conclude like for the previous theorems. Indeed, for all $T \geqslant$ 1, recalling Eq.~\eqref{eq:iteration_capacity_source}: for $T \geqslant 2$,
\begin{align*}
\mathsf{f}_{T} &\leqslant \frac{C_\beta}{2}\left(\frac{1+\beta}{2\gamma T}\right)^{1+\beta} + 2^{-\alpha} \left(1+\alpha\right)^{1+\alpha}\gamma^{1-\alpha} R_\alpha\sum_{k=0}^{T-1} \frac{\mathsf{f}_{t}}{\left(T-t\right)^{1+\alpha}} \\
&\leqslant\frac{C_\beta}{2}\left(\frac{1+\beta}{2\gamma T}\right)^{1+\beta} + \frac{1}{8\xi_\alpha} \sum_{t=0}^{T-1} \frac{\mathsf{f}_{t}}{\left(T-t\right)^{1+\alpha}} \\
&\leqslant \frac{C_\beta}{2}\left(\frac{1+\beta}{2\gamma T}\right)^{1+\beta} + \frac{1}{2\xi_\alpha} C_\beta\left(\frac{1+\beta}{\gamma}\right)^{1+\beta} \frac{\xi_{\alpha\vee\beta}}{T^{1+\alpha\wedge\beta}} \\
&\leqslant \frac{C_\beta}{2}\left(\frac{1+\beta}{2\gamma T}\right)^{1+\beta} + \frac{C_\beta}{2}\left(\frac{1+\beta}{\gamma}\right)^{1+\beta} \frac{1}{T^{1+\alpha\wedge\beta}} \\
&\leqslant 2 C_\beta\left(\frac{1+\beta}{\gamma}\right)^{1+\beta} \frac{1}{T^{1+\alpha\wedge\beta}}.
\end{align*}
That concludes the proof of Theorem~\ref{thm:capacity_source}.

\end{document}